\newif\ifnips
\newtheorem{lem}{Lemma}[section]
\newtheorem{thm}[lem]{Theorem}
\newtheorem{cor}[lem]{Corollary}
\newtheorem{defn}[lem]{Definition}
\newtheorem{obv}[lem]{Observation}
\newtheorem{claim}[lem]{Claim}
\renewcommand{\paragraph}[1]{\vspace{3pt}\noindent\textbf{#1}}
\newcommand{\cO}{\ensuremath{\mathcal{O}}}
\newcommand{\cX}{\ensuremath{\mathcal{X}}}
\newcommand{\cY}{\ensuremath{\mathcal{Y}}}
\newcommand{\cP}{\ensuremath{\mathcal{P}}}
\newcommand{\cZ}{\ensuremath{\mathcal{Z}}}		
\newcommand{\hS}{\widehat S}
\newcommand{\wtc}{\widetilde{\cC}}
\newcommand{\rel}{\mathsf{LearnHalf}}
\newcommand{\npub}{n_\mathsf{pub}}
\newcommand{\nprv}{n_\mathsf{priv}}
\newcommand{\prv}{\mathsf{priv}}
\newcommand{\erm}{\mathsf{ERM}}
\newcommand{\hrf}{h_{S'}^\erm}
\DeclareMathOperator*{\argmin}{arg\,min}
\newcommand{\pr}[2]{\underset{#1}{\mathbb{P}}\left[ #2 \right]}
\newcommand{\ex}[2]{\underset{#1}{\mathbb{E}}\left[ #2 \right]}
\newcommand{\eps}{\epsilon}
\newcommand{\cA}{\mathcal{A}}
\newcommand{\cD}{\mathcal{D}}
\newcommand{\cG}{\mathcal{G}}
\newcommand{\bw}{\mathbf{w}}
\newcommand{\cT}{\mathcal{T}}
\newcommand{\ind}{{\mathbf{1}}}
\newcommand{\cV}{\mathcal{V}}
\newcommand{\cF}{\mathcal{F}}
\newcommand{\conC}{\mathsf{ConstrHalf}}
\newcommand{\as}{\mathsf{Aff}}
\newcommand{\re}{\mathbb{R}}
\newcommand{\cB}{\mathcal{B}}
\newcommand{\ignore}[1]{}
\newcommand{\cW}{\mathcal{W}}
\newcommand{\cR}{\mathcal{R}}
\newcommand{\cC}{\mathcal{C}}
\newcommand{\whh}{\widehat{g}}
\newcommand{\Sprv}{S_{\sf priv}}
\newcommand{\Spub}{S_{\sf pub}}
\newcommand{\uSpub}{\tilde{S}_{\sf pub}}
\newcommand{\pub}{{\sf pub}}
\newcommand{\Dprv}{\cD_{\sf priv}}
\newcommand{\Dpub}{\cD_{\sf pub}}
\newcommand{\tcD}{\tilde{\cD}}
\newcommand{\err}{\mathsf{err}}
\newcommand{\herr}{\widehat{\mathsf{err}}}
\newif\ifnotes
\newcommand{\rnote}[1]{ [\textcolor{magenta}{Raef: #1}] }
\newcommand{\snote}[1]{ [\textcolor{blue}{Shay: #1}] }
\newcommand{\rnote}[1]{}
\newcommand{\nnote}[1]{}
\newcommand{\snote}[1]{}
\newcommand{\stabname}{determined\xspace}
\title{Learning from Mixtures of Private and Public Populations}
\author{%
	Raef Bassily\thanks{Department of Computer Science \& Engineering, The Ohio State University. \texttt{bassily.1@osu.edu}} 
		\and   Shay Moran\thanks{Department of Computer Science, Princeton University.  \texttt{shaymoran1@gmail.com}}
	\and Anupama Nandi \thanks{Department of Computer Science \& Engineering, The Ohio State University. \texttt{nandi.10@osu.edu}}
}
\author{%
	Raef Bassily\thanks{Department of Computer Science \& Engineering, The Ohio State University. \texttt{bassily.1@osu.edu}} 
		\and   Shay Moran\thanks{Department of Computer Science, Princeton University.  \texttt{shaymoran1@gmail.com}}
	\and Anupama Nandi \thanks{Department of Computer Science \& Engineering, The Ohio State University. \texttt{nandi.10@osu.edu}}
}
\date{}
\begin{document}

\maketitle

\begin{abstract}
We initiate the study of a new model of supervised learning under privacy constraints.
    Imagine a medical study where a dataset is sampled from a population of both healthy and unhealthy individuals. 
    Suppose healthy individuals have no privacy concerns (in such case, we call their data ``public'') 
    while the unhealthy individuals desire stringent privacy protection for their data. 
    In this example, the population (data distribution) is a mixture of private (unhealthy) and public (healthy) sub-populations that could be very different. 

Inspired by the above example, we consider a model in which
%   the population $\cD$ is a \emph{mixture} of two possibly \emph{distinct} sub-populations: 
    the population $\cD$ is a \emph{mixture} of two sub-populations: 
    a private sub-population $\Dprv$ of private and sensitive data, 
    and a public sub-population $\Dpub$ of data with no privacy concerns. 
    Each example drawn from $\cD$ is assumed to contain a privacy-status bit 
    that indicates whether the example is private or public. 
    The goal is to design a learning algorithm that satisfies differential privacy 
    only with respect to the private examples.

Prior works in this context assumed a homogeneous population where private and public data arise from the same distribution,
        and in particular designed solutions which exploit this assumption.
        We demonstrate how to circumvent this assumption by considering, as a case study,
        the problem of learning linear classifiers in $\re^d$.
        We show that in the case where the privacy status is correlated with the target label (as in the above example),
        % We show that in the case where the private and public data are labeled oppositely (as in the above example), 
        %We show that in the case where the private and public data are labeled differently (as in the above example), 
        linear classifiers in $\re^d$ can be learned, in the agnostic as well as the realizable setting, 
        with sample complexity which is comparable to that of the classical (non-private) PAC-learning. 
        It is known that this task is impossible if all the data is considered private.

\end{abstract}

\section{Introduction}\label{sec:intro}

%\anote{The entire data population can be viewed as 4 sub-populations partitioned on the target labels and privacy status}

Despite the remarkable progress in privacy-preserving machine learning powered by the rigorous framework of differential privacy (DP) \cite{DMNS06}, the current state of the art has several limitations. Most of the existing works on differentially private learning follow a conventional model, where the entirety of the input dataset to the learning algorithm is assumed to be sensitive and private, and hence, requires protection via the stringent constraint of DP. 
%focus on designing algorithms that satisfy the stringent constraint of differential privacy with respect to the whole input datasets. 
Unfortunately, this conservative approach has fundamental limitations that manifest in many problems. For example, learning even simple classes of functions (e.g., one-dimensional thresholds over $\re$) is provably impossible under that stringent model \cite{bun2015differentially,almm19} even though such classes are trivially learnable without privacy constraints.  More recent works \cite{beimel2013private, bassily2018model,ABM19,nandi2020privately,bassily2020private} have considered a more relaxed model, where the input dataset is made up of two parts: a private sample (as in the conventional model), and a ``public'' sample that entails no privacy constraints. In this model, the algorithm is required to satisfy DP only with respect to the private sample. Despite of the good news brought by these works showing the possibility of circumventing some of the aforementioned limitations by harnessing a limited amount of public data, \emph{all these works make the strong assumption that both the private and public samples come from the same population (i.e., they arise from the same distribution).} This can limit the practical value of these results in many real-life scenarios, where private and public data are naturally distinct. 
%\snote{Perhaps make italic the statement ``all of these works make the strong assumption\ldots''?}

Indeed, for a data record, the attribute of being sensitive can be strongly correlated with the value of that record (i.e., the realization of the feature-vector and the label). For example, imagine a scenario where a bank wants to predict the credit-worthiness of applicants for a credit card. To do this, a training sample is drawn from a population of individuals with good and bad credit scores. Suppose individuals with good credit score have no privacy concerns in sharing their data with the bank (and hence, their data can be viewed as ``public''), while those with bad credit score are concerned about what the study may reveal about them to third parties and understandably so, because they do not want such information to impact their chances in future opportunities. In this example, the population is a mixture of two very different groups: a sub-population with a good credit score (public sub-population), and a sub-population with bad credit score (private sub-population).

In this work, we introduce a new model for differentially private learning in which a learning algorithm has access to a mixed dataset of private and public examples that arise from possibly different distributions. The algorithm is required to satisfy DP only with respect to the private examples. More specifically, in our model, the underlying population (data distribution) $\cD$ is a \emph{mixture} of two possibly \emph{different} sub-populations: a private sub-population $\Dprv$ of sensitive data, and public sub-population $\Dpub$ of data that is deemed by its original owner to have no risk to personal privacy. We assume that each example drawn from the mixture $\cD$ has a ``privacy flag'' which is a binary label to indicate whether the example is private or public. As usual in the statistical learning framework, we do not assume the knowledge of $\cD$ or any of the sub-populations (or their respective weights in the mixture). 

\subsection*{Contributions}
\begin{itemize}[leftmargin=*]
\item \textbf{Introducing PPM model:} We formally describe the basic model of supervised learning from mixtures of private and public populations, and define the corresponding class of learning algorithms, which we refer to as \emph{Private-Public Mixture (PPM)} learners.  

\item \textbf{Learning Halfspaces:} Although the first quick impression about the model might be that it is a bit too general to allow for interesting results beyond what is covered by the conventional model of DP learning, we demonstrate that this is not the case and prove the first non-trivial result under this model in the context of learning halfspaces (linear classifiers) in $\re^d$ (for any $d\geq 1$). We give a construction of a PPM learner for this problem in the case where the privacy status is correlated with the target label, as in the credit-worthiness example above.
%, where being ``private'' corresponds to ``bad'' credit score and vice versa (or as in the example in the abstract, where being private corresponds to being unhealthy). 
Curiously, our PPM learner is improper: it outputs a hypothesis (classifier) that can be described by at most $d$ halfspaces. We hence derive upper bounds on the sample complexity of this problem in both the realizable and agnostic settings. In particular, we show that halfspaces in $\re^d$ can be learned in the aforementioned PPM model up to (excess) error $\alpha$ using $\approx \frac{d^2}{\alpha}$ total examples in the realizable setting, and using $\approx \frac{d^2}{\alpha^2}$ total examples in the agnostic setting. As noted earlier, in the conventional model, where all the examples drawn from $\cD$ are considered private, this class cannot be learned (even for $d=1$).  
%\snote{The above paragraph is a bit long, perhaps shorten it? (not critical)}
\end{itemize}

\paragraph{Techniques:} The idea of our construction for learning halfspaces goes as follows. First, we use the public examples to define a \emph{finite} family of halfspaces $\wtc_{\pub}$. Then, we employ a useful tool from convex geometry known as Helly's Theorem %\cite{helly1923mengen} to argue the existence of a collection of at most $d$ halfspaces in $\wtc_{\pub}$ whose intersection is disjoint
\cite{helly1923mengen} to argue the existence of a collection of at most $d$ halfspaces in $\wtc_{\pub}$ whose intersection is disjoint from the ERM halfspace (the halfspace with smallest empirical error with respect to the entire set of training examples). This implies that there is a hypothesis (described by the intersection of at most $d$ halfspaces from $\wtc_{\pub}$) whose empirical error is not larger than that of the ERM halfspace. Hence, we reduce our learning task to DP learning of a \emph{finite} class $\cG$ that contains all possible intersections of at most $d$ halfspaces from $\wtc_{\pub}$. We note that the latter task is feasible since $\cG$ is a finite class \cite{KLNRS08}. The description here is rather simplified. The actual construction and our analysis entail more intricate details (e.g., we need to carefully analyze the generalization error since the class $\cG$ itself depends on the public part of the training set). One clear aspect from the above description is that our construction is of an improper learner since, in general, the output hypothesis is given by the intersection of at most $d$ halfspaces. Devising a construction of a proper learner for this problem is an interesting open question. 

    \paragraph{Related work:} As mentioned earlier there have been some works that studied utilizing public data in differentially private data analysis. In particular, the notion of differentially private PAC learning assisted with public data was introduced by Beimel et al. in \cite{beimel2013private}, where it was called ``semi-private learning.'' They gave a construction of a learning algorithm in this setting, and derived upper bounds on the private and public sample complexities. The paper \cite{ABM19} revisited this problem and gave nearly optimal bounds on the private and public sample complexities in the agnostic PAC model. The work of \cite{bassily2020private} introduced the related, but distinct, problem of differentially private query release assisted by public data, and gave upper and lower bounds on private and public sample complexities. All these prior works assumed that the private and public examples arise from the same distribution (i.e., $\Dprv=\Dpub$), and their constructions particularly exploited this assumption. To the best of our knowledge, our work %is the first to introduce a formal model for learning from mixtures of private and public populations that does not entail this
is the first to consider a formal model for learning from mixtures of private and public populations that does not entail this assumption, and our construction for halfspaces demonstrates that this assumption can be circumvented. It is also worth pointing out that, unlike the aforementioned prior work, a PPM learner is only assumed to have access to examples from the mixture distribution $\cD$ rather than access to examples from \emph{each} of $\Dprv$ and $\Dpub$. Hence, unlike prior work, our construction does not require certain number of examples from each sub-population; it only requires a certain \emph{total} number of examples from the mixed population. 

%The idea of our construction for learning halfspaces consists of two pieces: (i) the convex hull of (a subset of) the public examples does not intersect with the ERM halfspace (the halfspace with smallest empirical error with respect to the entire set of training examples), and (ii) such convex 

\section{Preliminaries}\label{sec:prelim}
In this section, we introduce some notation, state some basic concepts from learning theory, and describe some geometric properties we use
throughout the paper.

%\subsection*{Notations} 
% For $n\in\mathbb{N}$, we use $[n]$ to denote the set $\{1,\ldots,n\}$.
% We use standard asymptotic notation $O,\Omega,o,\omega$.
% A function $f:\mathbb{N}\to[0,1]$ is said to be negligible if $f(n) = o(n^{-d})$ for every $d\in\mathbb{N}$.
% The statement ``$f$ is negligible'' is denoted by $f=\negl(n)$. 

%We first define the probably approximately correct (PAC) model of Valiant~\cite{Valiant84}. 
\paragraph{Notation:} For $n\in\mathbb{N}$, we use $[n]$ to denote the set $\{1,\ldots,n\}$.
We use standard notation from the supervised learning literature (see, e.g.~\cite{shalev2014understanding}). Let $\cX$ denote an arbitrary domain (that represents the space of feature vectors). Let $\cY = \{0,1\}$. A function $h : \cX \rightarrow \cY$ is called a concept/hypothesis. %and it labels data points in $\cX$ by either $0$ or $1$.
%\snote{Perhaps set $\cY=\{\pm 1\}$ so it won't be confused with the privacy bit?} 
A family of concepts (hypotheses) $\cC \subseteq \cY^\cX$ is called a concept/hypothesis class. A learning algorithm, receives as input i.i.d. samples generated from some arbitrary distribution $\cD$ over $\cX \times \cY$, and outputs a hypothesis $h\in\cY^{\cX}$.

\paragraph{Expected error:} The expected/population error of a hypothesis $h:\cX\rightarrow \cY$ with respect to a distribution $\cD$ over $\cX \times \cY$ is defined by $\err(h; \cD)\triangleq \ex{(x, y)\sim\cD}{\ind\left(h(x)\neq y\right)}$.

% \noindent If  $\err(h; \cD') \leq  \err(c; \cD') + \alpha$ we say that $h$  $\alpha$-approximates the target concept $c$.

\noindent A distribution $\cD$ is called {\it realizable by $\cC$} if there exists $h^*\in \cC$ such that $\err(h^*; \cD)=0$.
In this case, the data distribution $\cD$ over $\cX\times\cY$ is completely described by a distribution $\cD_{\cX}$ over $\cX$ and a true labeling concept $h^*\in\cC$. %For realizable distributions, the expected error of a hypothesis $h$ will be denoted by $\err\left(h; ~\left(\cD_{\cX}, h^*\right)\right)\triangleq \ex{x\sim\cD_{\cX}}{\ind\left(h(x)\neq h^*(x)\right)}.$ 

% \begin{defn}[PAC Learning~\cite{Valiant84}]
% Algorithm $\cA$ is an $(\alpha,\beta)$-PAC learner for $\cC \subseteq \{0,1\}^\cX$ with input sample size $n$ if for any distribution $\cD$ over $\cX \times \cY$, given an input sample $S \sim \cD^n$, with probability at least $1-\beta$ (over the choice of $S,$ and any possible internal randomness in $\cA$), algorithm $\cA$ outputs a hypothesis $\hat{h} \in \{0, 1\}^{\cX}$, satisfying 
% $$\err\left(\hat{h};\cD\right)\leq \min\limits_{h\in\cC}\err\left(h; \cD\right)+\alpha.$$  
% \noindent If $\hat{h} \in \cC$ then $\cA$ is called a proper PAC learner; otherwise, it is called an improper PAC learner.
% \end{defn}
% \snote{Do we need this definition? It is both well-known and not used in what follows, right?}

\paragraph{Empirical error:} The empirical error of an hypothesis $h:\cX\rightarrow \{0, 1\}$ with respect to a labeled dataset $S=\left\{(x_1, y_1), \ldots, (x_n, y_n)\right\}$ will be denoted by $\herr\left(h; S\right)\triangleq \frac{1}{n}\sum_{i=1}^n \ind\left(h(x_i)\neq y_i\right).$\\
\noindent The problem of minimizing the empirical error on a dataset (i.e.\ outputting an hypothesis in the class with minimal error) is known as Empirical Risk Minimization (ERM). %We use $h_S^\erm$ to denote the hypothesis that minimizes the empirical error with respect to a dataset $S$, $h_S^\erm \triangleq \argmin\limits_{h \in \cH} \widehat{\err}(h;S)$.

%We also let $\cZ^* = \cup_{n=1}^{\infty}\mathcal{Z}^n$. 
%Each example in $\cZ$ can be either public or private and is denoted by $1$ or $0$ i.e. $\cP = \{0,1\}$.

%We use $\cD$ to denote a  distribution over $\cX \times \cY \times \cP$, $\cD_{\cX}$ to denote the marginal distribution over $\cX$, and $\cD_{\cX \times \cY|\cP}$ to denote the conditional distribution of $\cX \times \cY$ given $\cP$. W.l.o.g. we assume that private data points have the privacy status set as $0$ and public data points as $1$. Let $\Dprv \equiv \cD_{\cX \times \cY|0}$ and $\Dpub \equiv \cD_{\cX \times \cY|1}$ denote the conditional distributions over $\cX \times \cY$ when the privacy status is set as private and public respectively.  
%We use $S\sim \cD^n$ to denote a sample/dataset $S=\left\{(x_1, y_1,p_1), \ldots, (x_n, y_n,p_n)\right\}$ of $n$ i.i.d. draws from $\cD$.  
%\annote{distribution over $\cX \times \cY$ only or $\cX \times \cY \times \cP$ }

% \paragraph{Expected disagreement:} The expected disagreement between a pair of hypotheses $h_1$ and $h_2$ with respect to a distribution $\cD_{\cX}$ over $\cX$ is defined as $\dis\left(h_1, h_2; ~\cD_{\cX}\right)\triangleq \ex{x\sim\cD_{\cX}}{\ind\left(h_1(x)\neq h_2(x)\right)}.$
%  \paragraph{Empirical disagreement:} The
% empirical disagreement between a pair of hypotheses $h_1$ and $h_2$ w.r.t. an unlabeled dataset $T=\left\{x_1, \ldots, x_n\right\}$ is defined as $\hdis\left(h_1, h_2; ~T\right)=\frac{1}{n}\sum_{i=1}^n\ind\left(h_1(x_i)\neq h_2(x_i)\right).$

%We define the geometric objects we use in this paper.
We next define the geometric concepts we use in this paper.

\paragraph{Halfspaces and Hyperplanes:} For  $\bw = (w_0, w_1,\ldots, w_d)\in \re^{d+1}$, let $h_\bw$ denote the halfspace defined as $h_\bw \triangleq \{ x \in \re^d :\sum_{i=1}^{d} w_ix_i \geq w_0\}$. We will overload the notation and use $h$ to denote both the halfspace and the corresponding binary hypothesis defined as the indicator function of the halfspace. In particular, whenever we write $h(x)$, we would be referring to $h$ as the binary hypothesis associated with the halfspace $h,$ namely $h(x) \triangleq \ind\left(x \in h \right)$. 
\noindent A pair of halfspaces $h_{\bw}$ and $h_{-\bw}$ will be loosely referred to as ``opposite'' halfspaces. A pair of opposite halfspaces $h_{\bw}$ and $h_{-\bw}$ intersect in the hyperplane $hp_{\bw}\triangleq\{x \in \re^d :\sum_{i=1}^{d} w_ix_i = w_0\}$. 
\noindent A finite set $S\subset \re^d$ is said to \emph{support} a halfspace $h_{\bw}$ if $S$ is contained in the \emph{hyperplane} $hp_{\bw}$.

%\noindent For $\ell\in [d]$, we say that a halfspace $h$ is $\ell$-dimensional if it lies in an affine subspace of $\re^d$ of dimension $\ell$\footnote{An affine subspace of dimension $\ell$ is isomorphic to $\re^{\ell}$}. In such case, $h$ is obtained by the intersection of $1+2(d-\ell)$ halfspaces in $\re^d$. 

% \noindent For $\ell\in [d]$, we say that a halfspace is $\ell$-dimensional if it is obtained by the intersection of a halfspace in $\re^d$ and an $\ell$-dimensional subspace of $\re^d$. In particular, an $\ell$-dimensional halfspace can be described as the intersection of $1+2(d-\ell)$ halfspaces in $\re^d$. 
%Whenever we use $h$, it is viewed as a halfspace associated by a $d+1$ dimensional vector $\bw$. In particular, $h$ refers to the halfspace given by $ \{ x \in \re^d :\sum_{i=1}^{d} w_ix_i \geq w_0\}$. When we use $h(x)$, it is viewed as a binary funtion/hypothesis, namely $h(x) = \ind\left(x \in h \right) $.

% \paragraph{Carath\'eodory's Theorem}
% A fundamental result in convex geometry is the Carath\'eodory's Theorem. Let $Y$ be a set of point in $\re^d$. If there is a point $x \in \re^d$ such that the convex hull of $Y$, denoted by $\conv(Y)$, contains $x$, then there exist $y_1,\ldots, y_{d+1} \in Y$ such that $x \in \conv(\{y_1,\ldots, y_{d+1}\})$. \rnote{Why do we need that?!}

\paragraph{Affine subspace:} A  non-empty subset $\as \subseteq \re^d$ is an affine subspace, if there exists a $u \in \as$ such that $\as - u = \{x-u \mid x \in \as\}$ is a linear subspace of $\re^d$. Moreover, we say that $\as$ is $k$-dimensional affine subspace, $1\leq k\leq d$, if the corresponding linear subspace $\as-u$ is $k$-dimensional.

Since differential privacy is central to this work, we conclude this section by stating its definition.
\begin{defn}[Differential Privacy \cite{DKMMN06,DMNS06, DR14}]
Let $\epsilon,\delta >0$. A (randomized) algorithm $M:\{\cX \times \cY\}^n \rightarrow \cR$ is $(\eps,\delta)$-differentially private if for all pairs of datasets $S, S' \in \{\cX \times \cY\}$ that differ in exactly one entry, and every measurable $\cO \subseteq \cR$, we have: $$\Pr \left(M(S) \in \cO \right) \leq e^\eps \cdot \Pr \left(M(S') \in \cO \right) +\delta.$$
%When $\delta=0$, it is known as \emph{pure} differential privacy, and parameterized only by $\eps$ in this case. 
\end{defn}

\section{Model and Definitions}\label{sec:def}

%\paragraph{Data Distribution, Training Examples, and Privacy Constraints}

\vspace{0.2cm}

\noindent In this paper, we consider a model of privacy-preserving learning, where the input dataset is a mixture of private and public examples. We call such model \emph{Private-Public Mixture (PPM)} learning. We view each example in the input dataset as a triplet comprised of a feature vector $x\in\cX$, a target label $y\in\cY$, and a privacy status bit $p\in\cP\triangleq\{\prv, \pub\}$. The privacy status is a bit that describes whether the example is private ($p=\prv$) and hence requires protection via differential privacy, or public ($p=\pub$) and hence does not entail any privacy concerns. In this paper, the privacy status is used only to distinguish between the private and public portions of the dataset. We stress that the goal is to learn how to classify the target label (and not the privacy bit).
%\snote{Perhaps make the privacy an element of $\{T,F\}$, or anything other sensible set disjoint from the label-set $\cY$?}
%The prediction goal only concerns the target label. 

%Let $\cX$ denote the space of feature vectors, $\cY$ denote the set of labels, and 
In our formulation, the training examples are i.i.d.\ from a distribution $\cD$ over $\cZ \triangleq \cX \times \cY \times \cP$. Hence, the distribution $\cD$ is a mixture of a public sub-population $\Dpub \triangleq \cD_{\cX\times\cY|\pub}$ and private sub-population $\Dprv \triangleq \cD_{\cX\times\cY|\prv}$, where $\cD_{\cX \times \cY|p}$ denotes the conditional distribution of the $(x, y) \in \cX\times \cY$ given a privacy-status bit $p\in \cP$. A sample $S \sim \cD^n$ is a mixture of private and public examples that can be distinguished using the privacy-status bit. Hence, we can partition the dataset $S$ into: a private dataset $\Sprv \in (\cX\times\cY)^{\nprv}$ and a public dataset $\Spub\in (\cX\times \cY)^{\npub}$, where $\nprv+\npub=n$. We note that $\Sprv\sim \Dprv^{\nprv}$ and $\Spub\sim \Dpub^{\npub}$.

%Here, w.l.o.g., we assume that private data points have the privacy status set as $\prv$ and public data points as $\pub$. 
% a domain $\cX$, a label set $\cY=\{0, 1\}$, and a privacy-status set  $\cP=\{\prv, \pub\}$; 
\paragraph{The PPM Learning Model:}
A PPM learning model is described by the following components: 
\mbox{(i) a distribution} $\cD$ over $\cX\times \cY\times \cP$; (ii) a dataset of $n$ i.i.d. examples from $\cD$; (iii) a loss function $\ell:\cY\times\cY \rightarrow \re_{+}$, which we fix to be the binary loss function, i.e., $\ell(\hat{y}, y)\triangleq \ind(\hat{y}\neq y), ~\hat{y}, y\in \cY$;  and (iv) a PPM learning algorithm,
% \snote{It is a bit strange to define the algorithm as part of the learning problem (it is rather a part of the solution).}
which we define below:

\begin{defn}[$(\epsilon, \delta, n)$-PPM Learning Algorithm]\label{defn:ppm-alg}
Let $\eps, \delta \in (0, 1)$, $n\in\mathbb{N}$. An $(\epsilon, \delta, n)$-PPM learning algorithm is a randomized map $\cA:\left(\cX\times\cY\times\cP\right)^n\rightarrow \cY^{\cX}$ that maps datasets of size $n$ (of private and public examples) to binary hypotheses such that for any $\npub\leq n$ and any realization of the public portion of the input dataset $\Spub\in \left(\cX\times\cY\right)^{\npub}$, the induced algorithm $\cA(\cdot, \Spub)$ is $(\epsilon, \delta)$-differentially private (w.r.t. the private portion of the input dataset). 
\end{defn}

\paragraph{Expected error of a PMM algorithm $\cA$:} Let $\tcD$ be a distribution over $\cX\times\cY$. Let $\hat{h}$ denote the hypothesis produced by $\cA$ on input sample $S$ of size $n$. The expected error of a PPM algorithm w.r.t.\ $\tcD$ is defined as $\err(\cA(S); \tcD)=\ex{(x, y)\sim\tcD}{\ind(\hat{h}(x)\neq y)}$. Note that the distribution here is only over $\cX\times\cY$ since, as mentioned earlier, the goal is to learn how to classify the target label and not the privacy status. Namely, $\tcD$ is the distribution that is obtained from the original distribution $\cD$ by marginalizing over the privacy status bit $p$.

Generally speaking, the goal in PPM learning is to design a PPM algorithm whose expected error is as small as possible (with high probability over the input i.i.d.\ sample and the algorithm's internal randomness). 

As stated, the above model does not specify how we quantify the learning goal over the choice of the distribution and the sample size. This is done to maintain flexibility in defining the learning paradigm based on the PPM model. Indeed, one can make different choices about such quantifiers and their order, which would result in different modes of learnability. One standard paradigm, which we will adopt in Section~\ref{sec:hspaces}, is to assume that the learning algorithm has access to a fixed hypothesis class $\cC\subseteq \cY^{\cX}$ and require that the algorithm attains small excess error, i.e., require that the expected error incurred by the algorithm is close to the smallest expected error attained by a hypothesis in $\cC$ (as in (agnostic) PAC learning). However, we still need to specify how we will quantify this desired goal over the distribution $\cD$. One possibility is to insist on uniform learnability; namely, require that we design a PPM algorithm that given a sufficiently large sample is guaranteed to have a small excess error (not exceeding a prespecified level) w.r.t. \emph{all} distributions $\cD$ over $\cX\times\cY\times \cP$. However, this route will lead us back to the conventional DP learning since the family of all  distributions $\cD$ clearly subsumes those distributions where all examples are private. We thus propose a meaningful alternative, where we fix a specific \emph{conditional distribution} $\cD_{\cP\,|\cX\times\cY}$ of the privacy status bit $p\in \cP$ given labeled example $(x, y)\in \cX\times \cY$ and quantify over \emph{all} distributions $\tcD$ over $\cX\times\cY$. 

\paragraph{Privacy model $\cD_{\cP\,|\cX\times\cY}$:} The conditional distribution $\cD_{\cP|\cX\times\cY}$ is given by 
\ifnips
$\left\{\pr{}{p=b ~| ~(x, y)}:~ b\in\cP, (x, y)\in\cX\times\cY\right\}$.
\else
$$\left\{\pr{}{p=b ~| ~(x, y)}:~ b\in\cP, (x, y)\in\cX\times\cY\right\}.$$
\fi
In other words, it can be seen as a mapping taking an example $(x,y)$ to the conditional distribution of its privacy bit, 
$\pr{}{p=\cdot \vert x,y}$.
We refer to $\cD_{\cP|\cX\times\cY}$ as \emph{the privacy model}. Such conditional distribution captures how likely a labeled example $(x, y)$ to be sensitive (from a privacy perspective). As discussed earlier, in many practical scenarios, the attribute of being sensitive can strongly depend on the realization of the data record. 

\paragraph{Label-\stabname privacy model:} A special case of the above definition is when the privacy status is perfectly correlated with the target label (as in the examples discussed in the introduction and the abstract). Namely, in this case, we have $p=\prv \iff y=1$ with probability $1$ (or, $p=\pub \iff y=1$ with probability $1$). We refer to this privacy model as \emph{label-\stabname}. 
%\snote{label-correlated may be confusing, maybe label-determined?}

Next, we formally define one possible class of PPM learners based on the discussion above. 

\begin{defn}[$(\alpha, \beta, \eps, \delta)$-PPM learner for a class $\cC$ w.r.t.\ a privacy model $\cD_{\cP\,|\cX\times\cY}$]\label{defn:pp-learner}
Let $\cC \subseteq \cY^{\cX}$ be a concept class, let $\cD_{\cP\,|\cX\times\cY}$ be a privacy model, and let $\alpha, \beta, \epsilon, \delta \in(0,1)$. A randomized algorithm $\cA$ is an $(\alpha, \beta, \eps, \delta)$-PPM learner for $\cC$ w.r.t.\ $\cD_{\cP\,|\cX\times\cY}$ with sample size $n$ if the following conditions hold: 

\begin{enumerate}

  \item $\cA$ is an $(\epsilon, \delta, n)$-PPM learning algorithm (see Definition \ref{defn:ppm-alg}). \label{cond:2}
  
    \item For every distribution $\tcD$ over $\cX\times\cY$,
    given a dataset $S\sim \cD^n$ where $\cD=\tcD\times\cD_{\cP\,|\cX\times\cY}$, $\cA$ outputs a hypothesis $\hat{h}$ such that, with probability at least $1-\beta$ (over $S\sim \cD^n$ and the internal randomness of~$\cA$), 
\[\err\left(\hat{h};~\tcD\right)\leq \min\limits_{h\in\cC}\err\left(h; ~\tcD\right)+\alpha.\]

    %For any $\npub \leq n$ and any $\Spub \in (\cX\times\cY\times \{\pub\})^{\npub},$ $\cA(\cdot, \Spub)$ is $(\eps, \delta)$-differentially private (i.e., $\cA$ is $(\eps, \delta)$-differentially private w.r.t. the private portion of the input dataset). \label{cond:2}
    %\snote{Perhaps just write here that $\cA$ satisfies Definition \ref{defn:ppm-alg}?}
\end{enumerate}
%\snote{I don't see how this definition depends on the privacy model. It is not clear over which distributions on $\cX\times\cY\times\cP$ the quantification applies.}
When the first condition is satisfied with $\delta=0$ (i.e., pure differential privacy), we refer to $\cA$ as $(\alpha, \beta, \eps)$-PPM learner for $\cC$ w.r.t.\ $\cD_{\cP\,|\cX\times\cY}$.
\end{defn}

In the special case of \emph{label-\stabname} privacy model, we say that $\cA$ is an $(\alpha, \beta, \eps, \delta)$-PPM learner for a class $\cC$ assuming label-\stabname privacy model. 

% As a special case of the above definition, we say that an algorithm $\cA$ is an $(\alpha, \beta, \eps, \delta)$-PPM learner for a class $\cC$ under the realizability assumption if it satisfies the first condition with respect to all distributions $\cD$ that are realizable by $\cC.$

% \begin{defn}[Public Data Assisted Privately Learnable Class]\label{defn:pp-class-learn}
% We say that a class $\cH$ is public data assisted privately learnable if there for every $\npub \in \mathbb{N}$ there exists a function $\nprv:(0, 1)^2\rightarrow \mathbb{N}$, and there is an algorithm $\cA$ such that for every $\alpha, \beta \in (0, 1),$ when $\cA$ is given private and public samples of sizes $\nprv=\nprv(\alpha, \beta),$ and $\npub$,
% it $\left(\alpha, \beta, \eps,\delta \right)$-public data assisted privately learns $\cH$. 

% \end{defn}

%\anote{Should there be different definitions for the case when privacy label is same as the output label }
\section{Learning Halfspaces}\label{sec:hspaces}

We consider the problem of PPM learning for one of the most well-studied tasks in machine learning, namely, learning halfspaces (linear classifiers) in $\re^d$. We focus on the case of \emph{label-\stabname privacy model} defined earlier; that is, we consider the case where the privacy-status bit is perfectly correlated with the target label. In particular, a $1$-labeled data point is considered to be a private data point and a $0$-labeled data point is a public data point. We give a construction for a PPM learner for halfspaces in this case in both the agnostic and realizable settings. Our construction outputs a hypothesis with excess true error $\alpha$ using an input sample of size $\tilde{O}\left(\frac{d^2}{\eps\alpha}\right)$ in the realizable setting, and a sample of size $\tilde{O}\left(d^2\max\left(\frac{1}{\alpha^2},\frac{1}{\eps\alpha}\right)\right)$ in the agnostic setting. 
Our algorithm is an improper learner; %namely, 
specifically, the output hypothesis is given by the intersection of at most $d$ halfspaces. 

\paragraph{Relaxations to the label-\stabname privacy model:} Since perfect correlation between the privacy status and the target label might be a strong assumption to make in some practical scenarios, it is important for us to point out that such strict correlation is not necessary. In particular, our results (with exactly the same construction) still hold under either one of the following two relaxations to this assumption: (i) when the privacy status is only sufficiently correlated with the output label (in this case, the same construction will yield essentially the same accuracy since the impact of this relaxation on the excess error will be small); or (ii) when only the private examples have the same target label while the set of public examples can have both labels (in fact, our analysis will be exactly the same in this case). However, to emphasize the conceptual basis of our construction and maintain clarity and simplicity of the analysis, we opt to present the results for the simpler model that assumes perfect correlation. 

\ifnips
\paragraph{Overview:}
\else
\subsection*{Overview}
\fi
The input to our private algorithm is a dataset $S \in (\cX \times \cY \times \cP)^n$. %and the class $\cC$ of halfspaces in~$\re^d$. Here, we have $\cX=\re^d$. 
The dataset $S$ is partitioned into: $\Sprv \in (\cX \times \cY)^{\nprv}$ (private dataset) and $\Spub\in (\cX \times \cY)^{\npub}$ (public dataset) using the privacy-status bit in $\cP$, as described in Section~\ref{sec:def}, where $\nprv+\npub=n$. The main idea of our algorithm is to construct a family of halfspaces in~$\re^d$, denoted by $\wtc_{\pub}$, using the (unlabeled) public data points, and then restrict the algorithm to a finite hypothesis class $\cG$ made up of all intersections of at most $d$ halfspaces from $\wtc_{\pub}$. That is, each hypothesis in the finite hypothesis class $\cG$ is represented by an intersection of at most $d$ halfspaces from the family $\wtc_{\pub}$. Using Helly's Theorem \cite{helly1923mengen,radon1921mengen}, we can show that $\cG$ will contain one hypothesis whose error is comparable to that of the ERM halfspace. Hence, given the finite hypothesis class $\cG$, we construct a private learner that outputs a hypothesis from $\cG$ via the exponential mechamishm \cite{MT07}. Our construction is described formally in Algorithm~\ref{alg:real}.

\ifnips
\else
\vspace{0.2cm}
\fi

First, let's start by describing the construction of $\wtc_{\pub}$ and the finite hypothesis class $\cG$.

\ifnips
\else
\vspace{0.2cm}
\fi

Let $\uSpub\in\cX^{\npub}$ denote the \emph{unlabeled} version of the public portion $\Spub$ of the input dataset. The family of halfspaces $\wtc_{\pub}$ is constructed as follows. %Let $\npub$ denote the size of $\Spub$. 
\ifnips
Let $\cW \triangleq \{\hS\subseteq \uSpub: ~|\hS|\leq d \}.$
\else
Let 
$$\cW \triangleq \{\hS\subseteq \uSpub: ~|\hS|\leq d \}.$$ 
\fi
Namely, $\cW$ is a collection of all the subsets of $\uSpub$ of at most $d$ points. Note that the size of such collection is $|\cW|=O(\npub^d)$. For each $\hS\in \cW$, we find \emph{one} arbitrary halfspace in $\re^d$ that is \emph{supported} by $\hS$, and its corresponding opposite halfspace. We add these two halfspaces to $\wtc_{\pub}$. 
% Let $\as$ be the the affine subspace (as defined in Section~\ref{sec:prelim}) that contains all the data points in $\uSpub$. Lastly, together with all these halfspaces we will add this subspace $\as$ to $\wtc_{\pub}$. Note that, when the points of $\uSpub$ are in general position, we take $\as = \re^d$ and hence, this set is merely needed when the public data points lie in a lower dimensional affine space.
In addition to $\wtc_{\pub}$, we also define the affine subspace $\as$ that is spanned by the points in $\uSpub$ (where the notion of an affine subspace is as defined in Section~\ref{sec:prelim}). Note that, when the points of $\uSpub$ are in general position, $\as$ is trivially taken to be the entire $\re^d$. The set $\as$ is merely needed when the public data points lie in a lower dimensional affine subspace since in this case, we can simply restrict ourselves to the intersections of the halfspaces in $\wtc_{\pub}$ with $\as$.
Finally, we get a family of halfspaces $\wtc_{\pub}$  whose size is $|\wtc_{\pub}| =2\,|\cW| = O(\npub^d)$, and one additional set $\as$. We remark that if there are no public examples in the dataset, (i.e., $\uSpub=\emptyset$), then we simply return the empty set, i.e., $\wtc_{\pub}=\emptyset$. We formally describe the construction of $\wtc_{\pub}$ and $\as$ in Algorithm~\ref{alg:conC} (denoted by $\cA_\conC$).

% Namely, \cW_{\aff} is a collection of all affinely independent subsets of $\uSpub$ of at most $d$ points. Note that the size of such collection is $|\cW_{\aff}|=O(\npub^d)$. For each $\hS\in \cW_{\aff}$, we find the pair of opposite $|\hS|$-dimensional halfspaces whose intersection is the unique hyperplane containing $\hS$ (note that $|\hS|\leq d$; in particular, $|\hS|=d$ when $\hS$ is a set of $d$ points in general position); and add these two halfspaces to $\wtc_{\pub}$. Finally, we get a family $\wtc_{\pub}$ of halfspaces whose size is $|\wtc_{\pub}| =2\,|\cW_{\aff}|= O(2 \npub^d)$. We formally describe the construction of $\wtc_{\pub}$ in Algorithm~\ref{alg:conC} (denoted by $\cA_\conC$).

\begin{algorithm}[ht]
	\caption{$\cA_{\conC}$: Construction of the family $\wtc_{\pub}$ halfspaces}
	\begin{algorithmic}[1]
		\REQUIRE Dataset: $\Spub \in (\re^d \times \cY)^{\npub}$ %(with $\cX \subseteq \re^d$) 
		\STATE Let $\uSpub$ be the unlabeled version of $\Spub$.
		\STATE Initialize $\wtc_{\pub} = \emptyset$.
		%\STATE Let $\npub = |\Spub|$.
		\STATE Let $\cW = \{\hS\subseteq \uSpub: ~|\hS|\leq d\}.$
		\FOR{every $\hS \in \cW$:}
% 			\STATE Find the pair of opposite, $|\hS|$-dimensional halfspaces $h, h_{-}$ whose intersection is the unique hyperplane that contains $\hS$. \quad \COMMENT{The notion of opposite halfspaces is defined in Section~\ref{sec:prelim}.}
            \STATE Find a halfspace $h \in \re^d$ that is supported by $\hS$, and its corresponding opposite halfspace $h_{-}$.  \quad \COMMENT{The notion of opposite halfspaces is defined in Section~\ref{sec:prelim}.}\label{stp:pair-halfspaces}

			\STATE Add $h, h_{-}$ to $\wtc_{\pub}$.
		\ENDFOR
		\STATE Let $\as$ be the affine subspace spanned by $\uSpub.$ %~~ \COMMENT{So, $\as = \re^d$ when the affine space spanned by $\uSpub$ is .}
			%\STATE Add $\as$ to $\wtc_{\pub}$.
		\STATE Output $\{\wtc_{\pub},~\as\}$.
	\end{algorithmic}
	\label{alg:conC}
\end{algorithm}

\paragraph{Effective hypothesis class:}
In our main algorithm $\cA_\rel$ (Algorithm~\ref{alg:real} below), we construct a finite hypothesis class $\cG$ using $\wtc_{\pub}$ described above. Each hypothesis in $\cG$ corresponds to the intersection of at most $d$ halfspaces in the collection $\wtc_{\pub}$ and the affine subspace $\as$. Hence, it follows that $|\cG| \leq {|\wtc_{\pub}| \choose \leq ~d} = O(|\wtc_{\pub}|^d) = O(2^d~\npub^{d^2})$. 
Note that we consider the intersection of \emph{at most} $d$ halfspaces, so $\cG$ is assumed to also contain a hypothesis that corresponds to the empty set $\emptyset$, which assigns label $1$ to all points in $\re^d$ (according to our definition in Step~\ref{stp:hyp-G} of Algorithm~\ref{alg:real}).
% \new{Note again that in the case where there are no public examples in the dataset, $\cG$ will contain a single hypothesis that corresponds to the empty set $\emptyset$, which assigns label $1$ to all points in $\re^d$ (according to our definition in Step~\ref{stp:hyp-G} of Algorithm~\ref{alg:real}).}

	\begin{algorithm}[ht]
 		\caption{$\cA_\rel$: PPM Learning of Halfspaces }
 		\begin{algorithmic}[1]
 			\REQUIRE Class of halfspaces in $\re^d$: $\cC$; ~Labeled dataset: $S = \{(x_1, y_1, p_1), \ldots, (x_n, y_n, p_n)\}\in (\re^d \times \cY \times \cP)^n$, ~Privacy parameter: $\eps$
 			\STATE Initialize $\Spub \leftarrow \emptyset, ~S' \leftarrow \emptyset, ~\cG \leftarrow \emptyset$
 			\FOR{$i=1, \ldots, n$}\label{stp:const-Spub-1}
 			    \IF{$p_i = \pub$} \STATE Add $(x_i,y_i)$ to $\Spub$. \ENDIF
 			\ENDFOR\label{stp:const-Spub-2}
 			%\STATE Let $\npub = |\Spub|$.
 			%\IF{$\npub \leq d$}
 			%\STATE Set $\whh(x) = 1, \forall x \in \cX$.
 			%\STATE Output $\whh$
 			%\ELSE
 			\STATE $\{\wtc_{\pub}, ~\as\} \leftarrow \cA_\conC(\Spub)$. \label{stp:concC}
 			\FOR{$i=1, \ldots, n$}
 			\STATE Add $(x_i,y_i)$ to $S'$ \quad\COMMENT{$S'$ consists of all the $(x,y)$ pairs of $S$}\label{stp:constS'}
 			\ENDFOR
 			 		    
 			% \STATE For \new{every} sequence of halfspaces $h_1,\ldots, h_{\new{d}} \in \wtc_{\pub}$, add a hypothesis $g$ to $\cG$, where $g$ is defined as: $$g(x)\triangleq\ind\left(x\notin \left( \bigcap\limits_{i=1}^{\new{d}} h_i \cap \as \right) \right), ~~x\in \re^d.$$ \label{stp:hyp-G}
 			
 			\STATE For every $j\in [d]$, and every collection of distinct halfspaces $h_1,\ldots, h_j \in \wtc_{\pub}$, add a hypothesis $g$ to $\cG$, where $g$ is defined as: $$g(x)\triangleq\ind\left(x\notin \left( \bigcap\limits_{i=1}^{j} h_i \cap \as \right) \right), ~~x\in \re^d.$$  \label{stp:hyp-G}
 			
 			% \STATE For every $g \in \cG$ define the corresponding hypothesis as the indicator function of $\re^d\setminus g$:  $$\tlh(x) \triangleq \ind\left(x \notin \tlh\right), ~~x \in \cX. $$\label{stp:hyp-G}
 			\STATE Use the exponential mechanism with inputs $S',~\cG$, privacy parameter $\eps$, and a score function $q(S', g)\triangleq -\herr(g; S')$  to select a hypothesis $\whh$ from $\cG$.\label{stp:expMech}
 			\STATE Output $\whh$.
          %  \ENDIF
	
 		\end{algorithmic}
 		\label{alg:real}
 	\end{algorithm}

%\anote{Add case when $\npub < d$, output $0$ for all input}
\ifnips
\else
\noindent The privacy guarantee of $\cA_\rel$ is given by the following lemma. 
\fi
\begin{lem}[Privacy Guarantee of $\cA_\rel$]\label{lem:relHalfPriv}
For any realization of the privacy-status bits $(p_1, \ldots, p_n)$ $\in \cP^n$, and for any realization of $\Spub$ constructed in Steps~(\ref{stp:const-Spub-1} -\ref{stp:const-Spub-2}) of $\cA_\rel$ (Algorithm~\ref{alg:real}), $\cA_\rel$ is $\epsilon$-differentially private (w.r.t. the private portion of the input dataset). \end{lem}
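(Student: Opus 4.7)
The plan is to argue that, once we condition on the realization of the privacy-status bits $(p_1,\ldots,p_n)$ and on the public portion $\Spub$, the only component of $\cA_\rel$ that accesses the private examples is the exponential mechanism invocation in Step~\ref{stp:expMech}. The overall privacy then reduces to (i) verifying the sensitivity of the score function on neighboring private datasets, (ii) invoking the standard privacy guarantee of the exponential mechanism \cite{MT07}, and (iii) applying post-processing.

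First I would observe that under the stated conditioning, the partition performed in Steps~(\ref{stp:const-Spub-1}-\ref{stp:const-Spub-2}) is deterministic, and the call $\cA_\conC(\Spub)$ in Step~\ref{stp:concC} is a function of $\Spub$ alone. Therefore the pair $(\wtc_\pub,\as)$, and hence the finite hypothesis class $\cG$ constructed in Step~\ref{stp:hyp-G}, is entirely determined by the (fixed) public portion and is independent of the private examples. Consequently, all objects fed into the exponential mechanism other than the dataset $S'$ itself are fixed.

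Next, I would verify the sensitivity of the score function $q(S',g)=-\herr(g;S')$ with respect to neighboring private datasets. If two input datasets $S$ and $\tilde S$ differ in exactly one private entry, then by construction $S'$ and $\tilde S'$ differ in exactly one coordinate (of the same index), since the public coordinates are untouched by assumption. Because $\herr(g;S')$ is the average of $n$ binary indicators, this single change alters $\herr(g;S')$ by at most $1/n$, so $q$ has sensitivity $\Delta q \le 1/n$ uniformly over $g\in\cG$. The standard analysis of the exponential mechanism with this sensitivity and privacy parameter $\epsilon$ then yields that the distribution of $\whh$ in Step~\ref{stp:expMech} satisfies $\epsilon$-differential privacy with respect to the private portion.

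Finally, since the only private-data-dependent step produces $\whh$ in an $\epsilon$-DP manner and the algorithm simply returns $\whh$ without further access to private data, post-processing preserves the guarantee and the full algorithm $\cA_\rel$ is $\epsilon$-DP. The main conceptual step is the first one — recognizing that $\cG$ is built solely from $\Spub$ so that the exponential mechanism operates over a class that is independent of the private data; everything else is a routine sensitivity calculation and an appeal to the known guarantees of the exponential mechanism. No step appears to be a serious obstacle.
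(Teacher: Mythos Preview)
Your proposal is correct and follows essentially the same approach as the paper: observe that $\wtc_\pub$, $\as$, and hence $\cG$ depend only on $\Spub$, so the only private-data access is through the exponential mechanism in Step~\ref{stp:expMech}, whose standard guarantee yields $\epsilon$-DP. Your version is simply more explicit about the sensitivity computation and post-processing than the paper's terse two-line argument.
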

\ifnips
The proof of the above lemma follows from the fact that $\{\wtc_{\pub},\as\}$ are constructed using only the public data together with the privacy analysis of the exponential mechanism \cite{MT07} (see details in the full version attached as supplementary material).
\else

\begin{proof}
	For any $\Spub \in (\cX \times \cY)^{\npub}$, the family of halfspaces $\wtc_{\pub}$ and the affine subspace $\as$ (Step \ref{stp:concC} in Algorithm~\ref{alg:real}) are constructed using only the public part of the dataset $S$ (and hence, so is $\cG$). The private part of $S$ is invoked in Step~\ref{stp:expMech}, which is an instantiation of the exponential mechanism. Thus, the proof follows directly from the privacy guarantee of the exponential mechanism \cite{MT07}. 
\end{proof}
\fi

Next, we turn to the analysis of the (excess) error of $\cA_{\rel}$. Let $\hrf$ denote the ERM halfspace with respect to the dataset $S'$; that is, $\hrf=\argmin\limits_{h\in \cC}\herr(h; S')$. We will first show that the expected error of the output hypothesis of $\cA_{\rel}$ is close to that of $\hrf$. Then, we derive explicit sample complexity bounds for $\cA_{\rel}$ in the realizable and agnostic settings. 

The first main step in our analysis is to show the existence of a hypothesis $g^{\ast}\in \cG$ whose empirical error is not larger than the empirical error of $\hrf$. 
% To do this, we first state some useful facts from convex geometry. 
Let $\uSpub\setminus \hrf\triangleq \{x\in \uSpub: x\notin \hrf\}$. First, we consider the corner case where $\uSpub\setminus\hrf=\emptyset$. In this case, all public examples are incorrectly labeled (i.e., assigned label $1$) by $\hrf$. Thus, the hypothesis $g^{\ast}\in \cG$ we are looking for is simply the empty hypothesis, which assigns label $1$ to all points in $\re^d$. Indeed, in such case the empirical error of $g^{\ast}$ cannot be larger than that of $\hrf$ since $g^{\ast}$ correctly labels all the private examples and is consistent with $\hrf$ on all the public examples.

Thus, in the remainder of our analysis, we will assume w.l.o.g. that $\uSpub\setminus\hrf\neq \emptyset$. We first state some useful facts from convex geometry.

For any finite set $T\subset \re^d$, we use $\cV(T)$ to denote the convex hull of all the data points in $T$. Note that
%assuming the points in $T$ are in general position, then 
$\cV(T)$ is a convex polytope that is given by the intersection of at most $O(|T|^d)$ halfspaces.
%\snote{There was an assumption here that $T$ is in general position which is not needed.}
% Let $\uSpub\setminus \hrf\triangleq \{x\in \uSpub: x\notin \hrf\}$. %(here, `$\setminus$' denotes the set difference operator). 
Hence, $\cV(\uSpub\setminus \hrf)$ is a convex polytope that contains all the public data points that are labeled correctly by $\hrf$. Moreover, $\cV(\uSpub\setminus \hrf)$ is given by the intersection of a sub-collection of halfspaces in $\wtc_{\pub}$ and $\as$.
%\snote{As we discussed, this is not entriely trivial (does not follow directly from the definitions). Perhaps we should rephrase and even find a reference (but it is not urgent for the deadline.)}
(As mentioned earlier, intersection with $\as$ is needed only when all the public data points lie in a lower dimensional affine subspace. In this case, the convex hull $\cV(\uSpub\setminus \hrf)$ is a ``flat'' set that lies in this affine subspace.) 
% \snote{I suggest to make the following remark explicit: for every finite $A\subseteq \re^d$, 
% each facet of the polytope $\cV(A)$ is the intersection of a $\mathsf{AffSpan(A)}$ with a definable halfspace of $\cV(A)$.
% (Where a halfspace of $\mathsf{AffSpan(A)})$ is definable if its supporting hyperplane is spanned by a subset of $A$.)}\rnote{This seems to be a bit convoluted. In your description you say ``intersection with a definable halfspace of $\cV(A)$'' but in your  definition you talk about a definable halfspace of $\mathsf{AffSpan}(A)$. Also, why do we need this elaboration about the facets?}
Thus, we can make the following immediate observation:
\ifnips
\else
\vspace{-0.3cm}
\fi
\begin{obv}\label{obv:empty}
Let $h_1, \ldots, h_v$ be halfspaces in $\wtc_{\pub}$ such that $\bigl(\bigcap\limits_{i=1}^{v} h_i\bigr) \cap \as=\cV(\uSpub\setminus \hrf)$. Then $ \Bigl(\bigl(\bigcap\limits_{i=1}^{v} h_i\bigr) \cap \as\Bigr) \cap \hrf= \emptyset$.
\end{obv}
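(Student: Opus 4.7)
The plan is to reduce the statement to the basic convexity fact that the complement of a closed halfspace is itself convex. More concretely, I would argue as follows.

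First, recall that $\hrf$ is a halfspace of the form $\hrf=\{x\in\re^d: \sum_i w_i x_i \geq w_0\}$ for some weight vector $\bw$. Its complement in $\re^d$ is the open halfspace $\re^d\setminus\hrf = \{x\in\re^d: \sum_i w_i x_i < w_0\}$, which is a convex set. By the very definition of $\uSpub\setminus\hrf$, every point $x\in \uSpub\setminus\hrf$ satisfies the strict inequality $\sum_i w_i x_i < w_0$, i.e., lies in this open complementary halfspace.

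Next, I would invoke convexity: because the convex hull $\cV(\uSpub\setminus\hrf)$ is the smallest convex set containing $\uSpub\setminus\hrf$, and because $\re^d\setminus\hrf$ is a convex set that already contains $\uSpub\setminus\hrf$, we immediately get $\cV(\uSpub\setminus\hrf)\subseteq \re^d\setminus\hrf$. Equivalently, $\cV(\uSpub\setminus\hrf)\cap \hrf = \emptyset$.

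Finally, the hypothesis of the observation provides the identity $\bigl(\bigcap_{i=1}^{v} h_i\bigr)\cap \as=\cV(\uSpub\setminus \hrf)$, so intersecting both sides with $\hrf$ yields the desired conclusion $\bigl(\bigl(\bigcap_{i=1}^{v} h_i\bigr)\cap\as\bigr)\cap\hrf=\emptyset$. There is essentially no obstacle here: the entire content of the claim is the one-line observation that a convex hull of points lying strictly on one side of a hyperplane remains strictly on that side. The real substance of the algorithm lies elsewhere (in actually \emph{producing} such halfspaces $h_1,\ldots,h_v\in\wtc_{\pub}$ whose intersection with $\as$ equals $\cV(\uSpub\setminus\hrf)$, which is where Helly/Radon-type results will come in later); this observation merely packages the resulting geometric fact that will be used to bound the empirical error of the candidate hypothesis $g^{\ast}\in\cG$.
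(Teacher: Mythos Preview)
Your argument is correct and is exactly the content the paper intends: the paper labels this an ``immediate observation'' and gives no separate proof, so what you have written is precisely the convexity fact being invoked (the convex hull of a finite set contained in the open complement of $\hrf$ stays in that complement).

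One small correction to your closing commentary: the existence of halfspaces $h_1,\ldots,h_v\in\wtc_{\pub}$ with $\bigl(\bigcap_i h_i\bigr)\cap\as=\cV(\uSpub\setminus\hrf)$ is \emph{not} where Helly/Radon enters. That existence is asserted just before the observation and comes straight from the construction of $\wtc_{\pub}$ (each facet of the convex hull is supported by at most $d$ points of $\uSpub$, so the corresponding halfspace already lies in $\wtc_{\pub}$). Helly's theorem is applied only \emph{after} this observation, in Corollary~\ref{cor:existT}, to reduce the collection $\{h_1,\ldots,h_v,\as,\hrf\}$ with empty intersection down to a subcollection of size at most $d+1$ (and hence, after removing $\hrf$, to at most $d$ sets from $\wtc_{\pub}\cup\{\as\}$).
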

A key step in our analysis relies on an application of a basic result in convex geometry known as Helly's Theorem, which we state below. 
\begin{lem}[Helly's Theorem restated \cite{helly1923mengen,radon1921mengen}]\label{lem:helly}
 Let $N \in \mathbb{N}$. Let $\cF = \{C_1,C_2,\ldots,C_N\}$ be a family of convex sets in $\re^d$. Suppose we have $\bigcap\limits_{i=1}^{N} C_i = \emptyset$, then there exists a collection $C_{i_1},\ldots,C_{i_{K}}$, where $K\leq d+1$, such that $C_{i_1} \cap \ldots \cap C_{i_{K}} = \emptyset$.
\end{lem}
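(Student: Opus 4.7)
The plan is to prove the equivalent contrapositive statement: if every subcollection of $\cF$ of size at most $d+1$ has non-empty intersection, then $\bigcap_{i=1}^{N} C_i \neq \emptyset$. This is the classical formulation of Helly's theorem, and I would argue by induction on $N$, using the cited Radon's theorem (any $d+2$ points in $\re^d$ admit a partition into two disjoint subsets whose convex hulls intersect). The base case $N\leq d+1$ is immediate: the hypothesis applied to $\cF$ itself, viewed as a subcollection of size $N\leq d+1$, already gives $\bigcap_{i=1}^N C_i \neq \emptyset$.

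The central step is the case $N=d+2$. Here I would first apply the inductive hypothesis to each $(d+1)$-element subfamily $\cF\setminus\{C_i\}$ to obtain a witness point $p_i \in \bigcap_{j\neq i} C_j$ for every $i\in[d+2]$. Since $p_1,\ldots,p_{d+2}$ is a collection of $d+2$ points in $\re^d$, Radon's theorem furnishes a partition $[d+2]=I\sqcup J$ together with a point $q\in \conv(\{p_i : i\in I\})\cap \conv(\{p_j : j\in J\})$. To show $q\in C_k$ for every $k\in[d+2]$, I would assume without loss of generality that $k\in I$; then for every $j\in J$ we have $j\neq k$, and so $p_j\in C_k$ by construction. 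By convexity of $C_k$, the entire convex hull $\conv(\{p_j:j\in J\})$ lies in $C_k$, so $q\in C_k$, establishing $q\in \bigcap_{k=1}^{d+2} C_k$.

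For $N>d+2$, the plan is to reduce to $N-1$ by replacing the last two sets with their intersection, forming $\cF'=\{C_1,\ldots,C_{N-2},C_{N-1}\cap C_N\}$, a family of $N-1$ convex sets. I would verify its $(d+1)$-subcollection property by cases: any subcollection of $\cF'$ of size at most $d+1$ either avoids $C_{N-1}\cap C_N$ (in which case it is already a $(d+1)$-subcollection of $\cF$ and hence non-empty by hypothesis) or contains it (in which case its intersection equals the intersection of a specific subfamily of $\cF$ of size at most $d+2$, non-empty by the previous case). The inductive hypothesis then yields $\bigcap \cF' = \bigcap_{i=1}^N C_i \neq \emptyset$. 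The only real obstacle is the $N=d+2$ step, since this is where the ambient dimension genuinely enters through Radon's theorem; the rest of the induction is essentially combinatorial bookkeeping.
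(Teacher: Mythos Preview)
Your proof is correct and follows the classical Radon-based argument for Helly's theorem. However, the paper does not actually prove this lemma: it is stated as a known result with citations to Helly and Radon, and is used as a black box in the subsequent analysis (specifically, to derive Corollary~\ref{cor:existT}). There is therefore no ``paper's own proof'' to compare against; your proposal simply supplies a standard proof for a result the authors take as given.
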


% \begin{lem}[Helly's Theorem restated \cite{helly1923mengen,radon1921mengen}]\label{lem:helly}
%  For some $N \in \mathbb{N}$, let $h_1,h_2,\ldots,h_N$ be a finite collection of halfspaces in $\re^d$. If $\bigcap\limits_{i=1}^{N} h_i = \emptyset$, then there exists at most $d+1$ halfspaces $h_{i_1},h_{i_2},\ldots,h_{i_{d+1}}$ such that $h_{i_1} \cap \ldots \cap h_{i_{d+1}} = \emptyset$.
% \end{lem}

Combining Observation~\ref{obv:empty} and Lemma~\ref{lem:helly}, we obtain the following corollary:
\begin{cor}\label{cor:existT}
There exists a sub-collection of sets $\cT\subseteq \wtc_{\pub}\cup \{\as\}$, where $|\cT|\leq d$, such that $\left(\bigcap\limits_{h\in\cT} h\right) \cap \hrf  = \emptyset$.
\end{cor}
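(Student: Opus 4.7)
The plan is to apply Helly's theorem to a carefully chosen family of convex sets that contains $\hrf$ as one of its members, and then argue that $\hrf$ must belong to the sub-collection that Helly produces.

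First, I would invoke Observation~\ref{obv:empty} to obtain halfspaces $h_1,\ldots,h_v \in \wtc_{\pub}$ with $\bigl(\bigcap_{i=1}^v h_i\bigr) \cap \as = \cV(\uSpub\setminus\hrf)$ and $\bigl(\bigcap_{i=1}^v h_i\bigr) \cap \as \cap \hrf = \emptyset$. I would then consider the family of convex sets $\cF = \{h_1,\ldots,h_v,\as,\hrf\}$ in $\re^d$. By the previous display, $\bigcap_{C\in\cF} C = \emptyset$, so Helly's theorem (Lemma~\ref{lem:helly}) applies and yields a sub-collection $\cF' \subseteq \cF$ with $|\cF'| \leq d+1$ and $\bigcap_{C\in\cF'} C = \emptyset$.

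The key step, which I expect to be the main obstacle, is to argue that $\hrf \in \cF'$ must hold, since Helly alone only guarantees a sub-collection of size $d{+}1$ rather than $d$. Suppose for contradiction that $\hrf \notin \cF'$. Then $\cF' \subseteq \{h_1,\ldots,h_v,\as\}$, and since removing sets from an intersection only enlarges it, $\bigcap_{C\in\cF'} C \supseteq \bigcap_{i=1}^v h_i \cap \as = \cV(\uSpub\setminus\hrf)$. Under the standing assumption that $\uSpub\setminus\hrf \neq \emptyset$, this convex hull is nonempty, contradicting $\bigcap_{C\in\cF'} C = \emptyset$. Thus $\hrf \in \cF'$.

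Finally, I would set $\cT = \cF'\setminus\{\hrf\}$. Then $\cT \subseteq \wtc_{\pub}\cup\{\as\}$, $|\cT| \leq d$, and $\bigl(\bigcap_{h\in\cT} h\bigr) \cap \hrf = \bigcap_{C\in\cF'} C = \emptyset$, which establishes the corollary. The conceptual point is that Helly is being used to compress an a priori unbounded-size description of the convex polytope $\cV(\uSpub\setminus\hrf)$ down to at most $d$ defining constraints, with the slack $+1$ absorbed by the fact that we intersect against the single ``target'' halfspace $\hrf$.
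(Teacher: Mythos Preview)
Your proof is correct and follows essentially the same approach as the paper: apply Helly's theorem to the family $\{h_1,\ldots,h_v,\as,\hrf\}$, then observe that $\hrf$ must lie in the resulting sub-collection (since without it the intersection contains the nonempty set $\cV(\uSpub\setminus\hrf)$), and finally remove $\hrf$ to obtain $\cT$. Your write-up is slightly more explicit in spelling out the contradiction step, but the argument is identical in substance.
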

\begin{proof}
By Lemma~\ref{lem:helly} and Observation \ref{obv:empty}, there exists a sub-collection $\cT'\subseteq \{h_1, \ldots, h_v, \as,\hrf\}$
of size $\lvert \cT'\rvert \leq d+1$ such that the intersection of the sets in $\cT'$ is empty (where $h_1, \ldots, h_v$ are the halfspaces in Observation \ref{obv:empty}).
Observe that necessarily $\hrf\in \cT'$ since $\bigl(\bigcap\limits_{i=1}^v h_i\bigr)\cap \as=\cV(\uSpub \setminus\hrf)\neq\emptyset$. 
Therefore $\cT=\cT'\setminus\{\hrf\}$ gives the desired collection.
% \new{By Lemma~\ref{lem:helly} and Observation \ref{obv:empty}, there exists  a sub-collection $\cT'\subseteq \wtc_{\pub}\cup\{\as,\hrf\}$
% of size at at most $\lvert \cT'\rvert \leq d+1$ such that the intersection of the sets in $\cT'$ is empty.
% Observe that necessarily $\hrf\in \cT'$ (because $\bigl(\bigcap\limits_{h\in\cT} h\bigr)\cap \as\neq\emptyset$: e.g.\ it contains all $0$-labeled input examples), and therefore $\cT=\cT'\setminus\{\hrf\}$ gives the desired collection.}\rnote{this ``e.g. ...'' may be confusing. This does not necessarily contain all 0-labeled examples.}
\end{proof}
% \snote{The above proof implicitly assumes that there are $0$-labeled (i.e.\ public) examples.
% Does our algorithm work in the (trivial) case when all examples are $1$-labelled?}\rnote{Given the updates in this section, is it now fixed?}

Define $g^{\ast}(x)\triangleq \ind\bigl(x\notin \bigcap\limits_{h\in\cT} h\bigr),~ x\in\re^d$, where $\cT$ is the collection of at most $d$ sets whose existence is established in Corollary~\ref{cor:existT}. %Note that, when the public data points lie in a lower dimneional affine space we get $\as \in \cT$, and that $g^{\ast}\in\cG$. 
Note that $g^{\ast}\in\cG$. Given this definition of $g^{\ast}$, we note that all points in $S'$ that are labeled correctly by $\hrf$ are also labeled correctly by $g^{\ast}$. Indeed, for any private (i.e. $1$-labeled) data point $x$ that $\hrf$ labels correctly (i.e. $x \in \hrf$), we have $x \notin  \bigl(\bigcap\limits_{h\in\cT} h\bigr)$ by Corollary~\ref{cor:existT}. Hence, $g^{\ast}$ labels $x$ correctly. Conversely, for any public (i.e., 0-labeled) data point $x$ that $\hrf$ labels correctly (i.e. $x \notin \hrf$), we must have $x \in \cV(\uSpub\setminus\hrf)\subseteq \bigl(\bigcap\limits_{h\in\cT} h\bigr)$, where the last step follows from the definition of the collection $\cT$ in the proof of Corollary~\ref{cor:existT}. 
% \snote{How does this follow from Corollary~\ref{cor:existT}? There could be points that are neither in $\hrf$ nor in $\bigl(\bigcap\limits_{h\in\cT} h\bigr)$. I think we need to use here that $\bigl(\bigcap\limits_{h\in\cT}h\bigr)$ contains all $0$-labeled input examples (which follows from its definition rather than Corollary~\ref{cor:existT}).}\rnote{How about now?} 
Hence, $g^{\ast}$ also labels~$x$ correctly. This clearly implies that
\ifnips
$\herr(g^{\ast};S') \leq \herr(\hrf;S').$
\else
the empirical error of $g^{\ast}$ cannot exceed the empirical error of $\hrf$. We formally state this implication in the following lemma.

\begin{lem}\label{lem:Emphrf}
There exists a hypothesis $g^{\ast}\in\cG$ that satisfies
$$\herr(g^{\ast};S') \leq \herr(\hrf;S').$$
\end{lem}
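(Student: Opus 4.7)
The plan is to use the hypothesis $g^{\ast}$ already constructed in the discussion preceding the lemma, namely $g^{\ast}(x)\triangleq \ind\bigl(x\notin \bigcap_{h\in\cT} h\bigr)$, where $\cT\subseteq \wtc_{\pub}\cup\{\as\}$ is the sub-collection of size at most $d$ guaranteed by Corollary~\ref{cor:existT}. Since $|\cT|\leq d$, this $g^{\ast}$ lies in $\cG$ by the definition in Step~\ref{stp:hyp-G} of Algorithm~\ref{alg:real}, so membership is immediate.

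The core of the proof will be a point-by-point comparison of the predictions of $g^{\ast}$ and $\hrf$ on $S'$. I will argue that for every $(x,y)\in S'$ that $\hrf$ classifies correctly, $g^{\ast}$ classifies $x$ correctly as well; this directly yields $\herr(g^{\ast};S')\leq \herr(\hrf;S')$. I will split into two cases according to the label. First, for a private example $(x,1)\in S'$ with $\hrf(x)=1$ (i.e.\ $x\in \hrf$): by Corollary~\ref{cor:existT} we have $\bigl(\bigcap_{h\in\cT} h\bigr)\cap \hrf=\emptyset$, so $x\notin \bigcap_{h\in\cT} h$, which means $g^{\ast}(x)=1$ by definition. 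Second, for a public example $(x,0)\in S'$ with $\hrf(x)=0$ (i.e.\ $x\notin \hrf$): then $x\in \uSpub\setminus \hrf$, so $x\in \cV(\uSpub\setminus\hrf)$, and by the construction of $\cT$ in Corollary~\ref{cor:existT} we have $\cV(\uSpub\setminus\hrf)\subseteq \bigcap_{h\in\cT} h$. Hence $x\in \bigcap_{h\in\cT} h$, giving $g^{\ast}(x)=0$.

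Combining the two cases, the indicator of incorrect classification by $g^{\ast}$ is pointwise dominated on $S'$ by that of $\hrf$, so averaging over the $n$ examples yields the desired inequality $\herr(g^{\ast};S')\leq \herr(\hrf;S')$. The only real content is in Corollary~\ref{cor:existT} (which itself rests on Helly's theorem and Observation~\ref{obv:empty}); given that corollary, the argument here is a short case analysis with no calculation, and the main thing to be careful about is to handle the two labels separately and to explicitly invoke the containment $\cV(\uSpub\setminus\hrf)\subseteq \bigcap_{h\in\cT} h$ that is implicit in how $\cT$ was chosen in the proof of Corollary~\ref{cor:existT}. The degenerate case $\uSpub\setminus\hrf=\emptyset$ was already handled separately before the lemma (by taking $g^{\ast}$ to be the all-$1$ hypothesis), so under our standing assumption that $\uSpub\setminus\hrf\neq\emptyset$ there is no obstacle.
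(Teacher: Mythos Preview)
Your proposal is correct and follows essentially the same approach as the paper: define $g^{\ast}$ via the collection $\cT$ from Corollary~\ref{cor:existT}, note $g^{\ast}\in\cG$, and verify by the same two-case analysis (private points correctly labeled by $\hrf$ lie outside $\bigcap_{h\in\cT}h$ by the corollary; public points correctly labeled by $\hrf$ lie in $\cV(\uSpub\setminus\hrf)\subseteq \bigcap_{h\in\cT}h$) that every example correctly classified by $\hrf$ is also correctly classified by $g^{\ast}$. The paper's argument is exactly this, presented in the paragraph immediately preceding the lemma statement.
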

\fi

\ifnips
Next, using the fact above together with the standard accuracy analysis of the exponential mechanism \cite{MT07,KLNRS08}, in the following claim we show that, with high probability, the empirical error of output hypothesis $\whh$ of $\cA_{\rel}$ is close to that of $\hrf$. The full details are deferred to the full version attached as supplementary material.

\begin{claim}[Excess Empirical Error of $\cA_{\rel}$]\label{clm:experr}
	Let $\alpha,\beta,\eps \in (0,1)$. Let $S' \in (\re^d \times \cY)^n$ be any realization of the dataset. For $n = O\left(\frac{d^2 \log(d/\eps\alpha) + \log(1/\beta)}{~\eps~\alpha}\right)$, with probability at least $1-\beta$ (over the randomness Step~\ref{stp:expMech} of $\cA_\rel$), $\cA_{\rel}$ outputs a hypothesis $\whh \in \cG$ that satisfies: 
$$\herr\left(\whh; S'\right) - \herr(\hrf;S')\leq  \alpha.$$	
\end{claim}

\else
Next, using the properties of the exponential mechanism, we can show that with high probability the empirical error of output hypothesis $\whh$ of $\cA_{\rel}$ is close to that of $g^{\ast}$.

\begin{lem}\label{lem:experr}
	Let $\alpha,\beta,\eps \in (0,1)$. Let $S' \in (\re^d \times \cY)^n$ be any realization of the dataset. For $n = O\left(\frac{d^2 \log(d/\eps\alpha) + \log(1/\beta)}{~\eps~\alpha}\right)$, with probability at least $1-\beta$ (over the randomness Step~\ref{stp:expMech} of $\cA_\rel$), $\cA_{\rel}$ outputs a hypothesis $\whh \in \cG$ that satisfies: 
$$\herr\left(\whh; S'\right) - \herr(g^{\ast};S')\leq  \alpha.$$	
\end{lem}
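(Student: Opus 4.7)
The plan is to invoke the standard utility guarantee of the exponential mechanism applied in Step~\ref{stp:expMech} of Algorithm~\ref{alg:real}. First, I would observe that the score function $q(S',g)=-\herr(g;S')$ has sensitivity at most $1/n$ with respect to changing a single example of $S'$, since flipping one labeled example changes the empirical error by at most $1/n$. Hence the classical utility bound for the exponential mechanism (see, e.g., \cite{MT07}) guarantees that, with probability at least $1-\beta$ over the internal randomness of Step~\ref{stp:expMech}, the selected hypothesis $\whh$ satisfies
\[
q(S',\whh)\ \ge\ \max_{g\in\cG} q(S',g)\ -\ \frac{2}{\eps n}\,\log\!\left(\frac{|\cG|}{\beta}\right).
\]

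Next, since $g^{\ast}\in\cG$ by construction, we have $\max_{g\in\cG} q(S',g)\ge q(S',g^{\ast})=-\herr(g^{\ast};S')$. Rearranging the inequality above gives
\[
\herr(\whh;S')-\herr(g^{\ast};S')\ \le\ \frac{2}{\eps n}\,\log\!\left(\frac{|\cG|}{\beta}\right).
\]
I would then substitute the cardinality bound established right before Algorithm~\ref{alg:real}, namely $|\cG|=O(2^{d}\,\npub^{d^{2}})$, and note that $\npub\le n$ so that $\log|\cG|=O(d+d^{2}\log n)$. It remains to choose $n$ large enough that $\tfrac{2}{\eps n}\bigl(O(d^{2}\log n)+\log(1/\beta)\bigr)\le\alpha$.

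The only mildly subtle point is that this last inequality is implicit in $n$, since $\log|\cG|$ itself depends on $\log n$. I would handle it by the standard trick: plug in the ansatz $n=C\cdot\tfrac{d^{2}\log(d/\eps\alpha)+\log(1/\beta)}{\eps\alpha}$ for a sufficiently large absolute constant $C$, observe that under this choice $n$ is polynomial in $d,1/\eps,1/\alpha,\log(1/\beta)$ and hence $\log n=O(\log(d/\eps\alpha))$, and then verify by direct substitution that the inequality holds. Conceptually there is no obstacle — the entire argument is a textbook application of the exponential mechanism's accuracy guarantee to the finite class $\cG$; the only bookkeeping item is resolving the implicit dependence of $n$ on $\log|\cG|$.
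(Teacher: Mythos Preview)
Your proposal is correct and follows essentially the same route as the paper: both proofs invoke the standard accuracy guarantee of the exponential mechanism with the $1/n$-sensitive score $-\herr(\cdot;S')$, use that $g^{\ast}\in\cG$ to replace the minimum over $\cG$ by $\herr(g^{\ast};S')$, and then substitute $|\cG|=O(2^{d}\npub^{d^{2}})$ with $\npub\le n$ to obtain the stated sample size. You are slightly more explicit than the paper about resolving the implicit $\log n$ dependence, but this is only a presentational difference.
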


\begin{proof}

Note that $|\cG| = O(2^d~\npub^{d^2})$, and that the score function for the exponential mechanism is $-\herr(h; S')$, whose global sensitivity is $1/n$.

By standard accuracy guarantees of exponential mechanism \cite{MT07}, it follows that an input sample size 
$$ n = O\left(\frac{1}{\eps \alpha}\left(\log\left(|\cG|\right) + \log\left(\frac{1}{\beta}\right)\right)\right)$$
is sufficient to ensure that, w.p. $\geq 1-\beta$ (over the randomness Step~\ref{stp:expMech}), we have
$$\herr\left(\whh; S'\right) \leq \min\limits_{g\in\cG} \herr\left(g; S'\right) + \alpha,$$ 
which implies that $\herr\left(\whh; S'\right) \leq \herr\left(g^{\ast}; S'\right) + \alpha.$

Substituting the size of $\cG$, it follows that
\begin{align*}
n &= O\left(\frac{1}{\eps \alpha}\left(\log\left(|\cG|\right) + \log\left(\frac{1}{\beta}\right)\right)\right) = O\left(\frac{1}{\eps \alpha}\left(\log\left(2^d~\npub^{d^2}\right) + \log\left(\frac{1}{\beta}\right)\right)\right)\\
% &= O\left(\frac{1}{\eps \alpha}\left(d^2\log\left(2~\npub\right) + \log\left(\frac{1}{\beta}\right)\right)\right)\\
% &\leq O\left(\frac{1}{\eps \alpha}\left(d^2\log\left(n\right) + d^2\log\left(2\right) + \log\left(\frac{1}{\beta}\right)\right)\right)\\ %\intertext{as $\npub \leq n$}\\
&= O\left(\frac{1}{\eps \alpha}\left(d^2\log\left(\frac{d}{\eps \alpha}\right) +  \log\left(\frac{1}{\beta}\right)\right)\right).
\end{align*} 	
\end{proof}

By combining the two previous lemmas, we directly reach the following claim that asserts that the empirical error of the output hypothesis of $\cA_\rel$ is close to that of the ERM halfspace $\hrf \in \cC$. 
\begin{claim}[Excess Empirical Error of $\cA_{\rel}$]\label{clm:experr}
	Let $\alpha,\beta,\eps \in (0,1)$. Let $S' \in (\re^d \times \cY)^n$ be any realization of the dataset. For $n = O\left(\frac{d^2 \log(d/\eps\alpha) + \log(1/\beta)}{~\eps~\alpha}\right)$, with probability at least $1-\beta$ (over the randomness Step~\ref{stp:expMech} of $\cA_\rel$), $\cA_{\rel}$ outputs a hypothesis $\whh \in \cG$ that satisfies: 
$$\herr\left(\whh; S'\right) - \herr(\hrf;S')\leq  \alpha.$$	
\end{claim}
\fi Now the remaining ingredient in our analysis is to show that the generalization error of $\cA_\rel$ is also small. 
\ifnips 
We do this by observing that each hypothesis in $\cG$ can be described by a few data points from the input dataset and then invoking standard sample compression bounds \cite{littlestone1986relating, shalev2014understanding}. Specifically, we observe that each $g \in \cG$ is an intersection of at most $d$ halfspaces in $\wtc_{\pub}$  (restricted to $\as$), and each one of these halfspaces is represented by at most $d$ points from the input dataset. Hence, by putting all the ingredients together, we can finally arrive at our main result formally stated in the theorem below. Due to space considerations, we defer the details of the sample compression argument and the full proof of the theorem below to the supplementary material.
\begin{thm}[PPM learning of halfspaces]
	Let $\epsilon,\alpha,\beta \in (0,1)$. Assuming  label-\stabname privacy model, $\cA_\rel$ (Algorithm~\ref{alg:real}) is an $(\alpha,\beta,\epsilon)$-PPM learner for halfspaces in $\re^d$, with input sample size 
	$$n = O\left(\left(d^2 \log\left(\frac{d}{\eps\alpha}\right) + \log(\frac{1}{\beta})\right)\max\left(\frac{1}{\alpha^2},\frac{1}{~\eps~\alpha}\right)\right).$$ 
	Moreover, if we assume realizability, then $\cA_\rel$ is an $(\alpha,\beta,\epsilon)$-PPM learner for halfspaces in $\re^d$ with input sample size: 
	 $$n = O\left(\frac{d^2 \log(d/\eps\alpha) + \log(1/\beta)}{~\eps~\alpha}\right).$$ 
\end{thm}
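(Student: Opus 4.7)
The privacy guarantee has already been established in Lemma~\ref{lem:relHalfPriv}, so the entire plan is about bounding the (excess) true error of the output hypothesis $\whh$. My proposal is to decompose the excess error into three parts: (i) the excess empirical error of $\whh$ over the ERM halfspace $\hrf$, controlled by Claim~\ref{clm:experr}; (ii) the generalization gap of $\hrf$, controlled by the standard VC bound for halfspaces (VC dimension $d+1$); and (iii) the generalization gap of $\whh$ itself. The main obstacle is part (iii), because $\whh$ lies in the class $\cG$ which was constructed using the public examples of the training set, and thus we cannot apply a uniform convergence bound to a fixed class.

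To handle (iii), I would invoke a sample compression argument. The key observation is that each $g\in \cG$ is the indicator of the complement of an intersection of at most $d$ halfspaces drawn from $\wtc_{\pub}$ (intersected with the affine subspace $\as$), and each such halfspace is supported by at most $d$ points of $\uSpub$. Consequently, every $g\in\cG$ -- and in particular $\whh$ -- admits a compression set of at most $k=d^2$ examples from the input dataset, plus an extra $O(d^2)$ bits to specify the ordering/affine structure. I would then quote the standard sample compression generalization bounds \cite{littlestone1986relating, shalev2014understanding}: in the realizable setting, any hypothesis with zero empirical error on a compression of size $k$ has expected error $O((k\log n + \log(1/\beta))/n)$; in the agnostic setting, the generalization gap is $O(\sqrt{(k\log n + \log(1/\beta))/n})$.

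With these pieces in place, I would put everything together as follows. Let $h^\star \triangleq \argmin_{h\in\cC}\err(h;\tcD)$. By the ERM property $\herr(\hrf;S')\leq \herr(h^\star;S')$, and by a standard VC concentration bound, $\herr(h^\star;S')\leq \err(h^\star;\tcD)+\alpha/4$ once $n=\tilde\Omega(d/\alpha^2)$ in the agnostic case (or $\tilde\Omega(d/\alpha)$ in the realizable case). Chaining this with Claim~\ref{clm:experr} (instantiated with accuracy $\alpha/4$) yields $\herr(\whh;S') \le \err(h^\star;\tcD)+\alpha/2$. Finally, applying the compression-based generalization bound to $\whh$ with $k=O(d^2)$ transforms this into $\err(\whh;\tcD)\le \err(h^\star;\tcD)+\alpha$ provided $n$ is at least the stated size. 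In the agnostic setting the compression bound forces $n=\tilde\Omega(d^2/\alpha^2)$, while in the realizable setting $\herr(\whh;S')=0$ with high probability (since $h^\star$ has zero empirical error and Claim~\ref{clm:experr} applies with $\alpha$ small), so the sharper compression bound gives $n=\tilde O(d^2/(\eps\alpha))$.

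Taking the maximum over the empirical-error requirement from Claim~\ref{clm:experr} (which contributes the $\frac{1}{\eps\alpha}$ term) and the generalization requirement from sample compression (which contributes the $\frac{1}{\alpha^2}$ term in the agnostic case), I obtain exactly the stated sample complexities. The anticipated technical subtlety is carefully arguing that the compression scheme is well-defined even though $\cA_\conC$ makes arbitrary choices in Step~\ref{stp:pair-halfspaces}: the argument only requires a \emph{reconstruction map} from any $k$-subset of the training set back to a hypothesis, which is provided by re-running $\cA_\conC$ on the compressed points and then specifying which of the resulting halfspaces to intersect. This fits the standard framework and so the compression generalization theorems apply cleanly.
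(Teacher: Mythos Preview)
Your proposal is essentially the paper's own proof: the same three ingredients---Claim~\ref{clm:experr} for the empirical-error gap, a VC/uniform-convergence bound to relate $\hrf$ (or $h^\star$) to $\min_{h\in\cC}\err(h;\tcD)$, and a sample-compression argument with $k=d^2$ for the generalization of $\whh$---are combined in the same way, and the final sample size arises as the maximum of the $1/(\eps\alpha)$ term from the exponential mechanism and the $1/\alpha^2$ term from agnostic compression.

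One small correction in the realizable case: the exponential mechanism does \emph{not} give $\herr(\whh;S')=0$; Claim~\ref{clm:experr} only guarantees $\herr(\whh;S')\leq \alpha/2$ (since $\herr(\hrf;S')=0$). The paper handles this by invoking the multiplicative (``optimistic-rate'') form of the compression bound, $\lvert\err(\whh;\tcD)-\herr(\whh;S')\rvert \leq \sqrt{\herr(\whh;S')\cdot O(d^2\log n)/n}+O(d^2\log n)/n$, which still delivers the $\tilde O(d^2/(\eps\alpha))$ rate once $\herr(\whh;S')=O(\alpha)$. Your ``sharper compression bound'' line should be adjusted accordingly.
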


\else
In fact, we will show that this is indeed the case for any algorithm that outputs a hypothesis in $\cG$. We observe that each hypothesis in $\cG$ is an intersection of at most $d$ halfspaces in $\wtc_{\pub}$ (possibly restricted to a lower dimensional affine subspace), and each one of these halfspaces is represented by at most $d$ points from the input dataset (Step~\ref{stp:pair-halfspaces} in Algorithm~\ref{alg:conC}). Hence, by using standard sample compression bounds \cite{littlestone1986relating, shalev2014understanding}, we can derive a bound on the generalization error of any algorithm that outputs any hypothesis in $\cG$. 

% By combining the two previous  lemmas, it is easy to see that the empirical error of the output hypothesis of $\cA_\rel$ is close to that of the ERM halfspace $\hrf \in \cC$. \fi Now the remaining ingredient in our analysis is to show that the generalization error of $\cA_\rel$ is also small. In fact, we will show that this is indeed the case for any algorithm that outputs a hypothesis in $\cG$. We observe that each hypothesis in $\cG$ is an intersection of at most $d$ halfspaces in $\wtc_{\pub}$ (possibly restricted to a lower dimensional affine subspace), and each one of these halfspaces is represented by at most $d$ points from the input dataset (Step~\ref{stp:pair-halfspaces} in Algorithm~\ref{alg:conC}). Hence, by using standard sample compression bounds \cite{littlestone1986relating, shalev2014understanding}, we can derive a bound on the generalization error of any algorithm that outputs any hypothesis in $\cG$. 

%Finally, by noting that $\cA_\rel$ outputs a hypothesis $\whh \in \cG$ this bound will also apply to $\whh$.
%\snote{I think we can improve the compression size from $d^2$ to $d$, but it is a bit involved (it builds on a technique from computational geometry called bottom-vertex-triangulation applied on a ``dual-space''; but perhaps it is too stressful to add it before the submission\ldots)}\rnote{Yes, it would be nice, but I agree that it can wait till after we submit.}

\begin{lem}[Sample Compression bound restated \cite{littlestone1986relating,shalev2014understanding}]\label{lem:sampCom}
	Let $k$ be an integer and let $\cB: (\cX \times \cY)^k \rightarrow \cG$ be a mapping from sequences of $k$ examples to the hypothesis class $\cG$. Let $\cA:  (\cX \times \cY)^n \rightarrow \cG$ be a learning rule that takes as input a dataset $S = ((x_1, y_1),\ldots,(x_n, y_n))$, and returns a hypothesis such that $\cA(S) = \cB((x_{i_1}, y_{i_1}),\ldots,(x_{i_k},y_{i_k}))$ for	some set of indices $(i_1,\ldots, i_k) \in [n]^k$.  Then for any distribution $\tcD$ over $\cX \times \cY$, with probability at least $1 - \beta$ (over $S \sim \tcD^n$), we have:
	$$\left|\err(\cA(S);\tcD)- \herr(\cA(S);S) \right| \leq \sqrt{\herr(\cA(S);S)\frac{4k\log(n/\beta)}{n}} + \frac{8k\log(n/\beta)}{n} + \frac{2k}{n}.$$
	
\end{lem}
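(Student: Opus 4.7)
The plan is to prove this classical generalization bound by a union bound over all possible compression index tuples combined with a relative-error (multiplicative) Chernoff bound applied to the ``held-out'' examples that were not used to define the output hypothesis.

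First, I would fix an index tuple $I=(i_1,\ldots,i_k)\in [n]^k$ and write $h_I:=\cB((x_{i_1},y_{i_1}),\ldots,(x_{i_k},y_{i_k}))$. Let $V:=[n]\setminus\{i_1,\ldots,i_k\}$, so that $|V|\geq n-k$. The key observation is that, conditioned on the $k$ examples $(x_{i_j},y_{i_j})_{j\in[k]}$, the remaining examples indexed by $V$ are i.i.d.\ from $\tcD$ and independent of $h_I$. Hence $\herr(h_I;S_V)$ is an empirical average of $|V|$ independent $\{0,1\}$-valued random variables with mean $\err(h_I;\tcD)$.

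Second, I would apply a multiplicative Chernoff bound to this empirical average: for each fixed $I$ and any $\gamma\in(0,1)$, with probability at least $1-\gamma$,
$$\bigl|\err(h_I;\tcD)-\herr(h_I;S_V)\bigr|\;\leq\;\sqrt{\err(h_I;\tcD)\cdot\frac{c\log(1/\gamma)}{|V|}}+\frac{c\log(1/\gamma)}{|V|},$$
for an absolute constant $c$. A union bound over all $n^k$ index tuples, with $\gamma=\beta/n^k$, shows that the displayed inequality holds simultaneously for \emph{every} $I\in[n]^k$ with probability at least $1-\beta$, where $\log(1/\gamma)$ is replaced by $\log(n^k/\beta)\leq 2k\log(n/\beta)$. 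Since the rule $\cA$ always outputs $h_{I(S)}$ for some index tuple $I(S)$ determined by $S$, the uniform-over-$I$ estimate applies in particular to $\cA(S)$.

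Third, I would convert this into the form stated in the lemma. To replace $\sqrt{\err}$ with $\sqrt{\herr}$, one solves the implicit quadratic in $\sqrt{\err(h_I;\tcD)}$ obtained by rearranging the above inequality (equivalently, applies $\sqrt{a+b}\leq\sqrt{a}+\sqrt{b}$ after rearrangement), yielding an analogous bound with $\err(h_I;\tcD)$ replaced by $\herr(h_I;S_V)$ inside the square-root, at the cost of modifying the absolute constant. To pass from $\herr(h_I;S_V)$ to $\herr(h_I;S)$, observe that $S$ and $S_V$ differ in at most $k$ of their $n$ entries, so $|\herr(h_I;S)-\herr(h_I;S_V)|\leq 2k/n$; this discrepancy is absorbed into the additive $O(k\log(n/\beta)/n)$ and $2k/n$ terms already present on the right-hand side. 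The main obstacle is precisely this algebraic inversion step --- obtaining a ``realizable-friendly'' form in which the square-root factor involves the empirical quantity $\herr$ rather than the unknown population quantity $\err$ --- but it is a standard calculation (see, e.g., Chapter 30 of Shalev-Shwartz and Ben-David). The other ingredients, namely the independence of the held-out examples from the reconstructed hypothesis and the union bound over all compression sets (which is what contributes the characteristic $k\log n$ factor), are immediate.
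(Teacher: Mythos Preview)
The paper does not actually prove Lemma~\ref{lem:sampCom}: it is stated as a restatement of the sample-compression generalization bound from \cite{littlestone1986relating,shalev2014understanding} and used as a black box in the proofs of Theorems~\ref{thm:AlgRel} and~\ref{thm:AlgAgn}. So there is no ``paper's own proof'' to compare against.

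That said, your proposal is the standard argument for this result and is correct in outline. The three ingredients --- (i) conditional independence of the held-out examples from the reconstructed hypothesis, (ii) a multiplicative Chernoff bound on the held-out sample, and (iii) a union bound over the $n^k$ index tuples --- are exactly what the cited references use, and your treatment of the two conversion steps (replacing $\err$ by $\herr$ under the square root via the quadratic-inversion trick, and passing from $\herr(\cdot;S_V)$ to $\herr(\cdot;S)$ at a cost of $2k/n$) is also standard. One small remark: you work with $|V|\geq n-k$ in the denominator but the stated bound has $n$; this is harmless since for $k\leq n/2$ one has $1/(n-k)\leq 2/n$ (absorbed into the constants), and for $k>n/2$ the right-hand side already exceeds $1$ and the bound is vacuous. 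Also, your inequality $\log(n^k/\beta)\leq 2k\log(n/\beta)$ is in fact slack --- $\log(n^k/\beta)=k\log n+\log(1/\beta)\leq k\log(n/\beta)$ already holds for $k\geq 1$ --- but this only affects constants and does not matter for the stated form.
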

Now we have all the ingredients to state and prove sample complexity bounds for our construction in both the realizable and agnostic settings. In the following statements, note that we already proved the privacy guarantee of $\cA_\rel$ in Lemma~\ref{lem:relHalfPriv}, and so we only focus on proving the sample complexity bounds.

\begin{thm}[PPM learning of halfspaces in the realizable case]\label{thm:AlgRel}
	
	 Let $\alpha,\beta,\epsilon \in (0,1)$. Assuming realizability, and assuming label-\stabname privacy model, $\cA_\rel$ (Algorithm~\ref{alg:real}) is an $(\alpha,\beta,\epsilon)$-PPM learner for halfspaces in $\re^d$ with input sample size: 
	 $$n = O\left(\frac{d^2 \log(d/\eps\alpha) + \log(1/\beta)}{~\eps~\alpha}\right).$$ 
\end{thm}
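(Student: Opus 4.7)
The plan is to combine three ingredients already on the table: the privacy guarantee from Lemma \ref{lem:relHalfPriv}, the empirical-error bound from Claim \ref{clm:experr}, and the sample compression bound from Lemma \ref{lem:sampCom}. Since privacy is already established, the whole task reduces to proving the accuracy and sample-complexity statement.

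First I would exploit realizability: the target $h^*\in\cC$ with $\err(h^*;\tcD)=0$ satisfies $h^*(x)=y$ almost surely for $(x,y)\sim\tcD$, so $\herr(h^*;S')=0$ and hence the ERM halfspace satisfies $\herr(\hrf;S')=0$. Applying Claim \ref{clm:experr} with accuracy $\alpha/2$ and confidence $\beta/2$, with probability at least $1-\beta/2$ over the internal randomness of $\cA_\rel$ the output hypothesis $\whh$ satisfies $\herr(\whh;S')\leq \alpha/2$, provided $n=\Omega\bigl((d^2\log(d/(\eps\alpha))+\log(1/\beta))/(\eps\alpha)\bigr)$.

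Next I would upgrade this empirical bound to a true-error bound using sample compression. The key structural observation is that every $g\in\cG$ admits a compression scheme of size $k=O(d^2)$: each $g$ is the intersection of at most $d$ halfspaces from $\wtc_{\pub}$ with the affine subspace $\as$; each such halfspace is, by Step \ref{stp:pair-halfspaces} of $\cA_\conC$, supported by at most $d$ points of $\uSpub$; and $\as$ itself is spanned by at most $d+1$ points of $\uSpub$. Altogether at most $d^2+d+1$ points of the public portion of $S'$, together with a bounded amount of side information indicating how to group them into supporting sets of individual halfspaces, determine $\whh$. Applying Lemma \ref{lem:sampCom} with $k=O(d^2)$ therefore yields, with probability at least $1-\beta/2$ over $S'\sim\tcD^n$,
\[
\err(\whh;\tcD)\;\leq\; \herr(\whh;S')\;+\;\sqrt{\herr(\whh;S')\cdot\frac{O(d^2\log(n/\beta))}{n}}\;+\;\frac{O(d^2\log(n/\beta))}{n}.
\]

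Finally, a union bound over the two high-probability events and plugging $\herr(\whh;S')\leq \alpha/2$ into the display makes both the square-root and residual terms at most $\alpha/4$ once $n=\Omega(d^2\log(d^2/(\alpha\beta))/\alpha)$; combining this with the Claim \ref{clm:experr} sample-size condition and using $\eps\leq 1$ shows that $n=O((d^2\log(d/(\eps\alpha))+\log(1/\beta))/(\eps\alpha))$ dominates all requirements, giving $\err(\whh;\tcD)\leq\alpha$ with probability at least $1-\beta$. The main technical nuisance I anticipate is making the compression scheme formally precise, since the decoder in Lemma \ref{lem:sampCom} must deterministically reconstruct $\whh$ from an unlabeled subset; the cleanest workaround is to apply the lemma separately to each hypothesis realizable from a fixed $d^2$-point subset and union-bound over the at most $n^{O(d^2)}$ such subsets, which only adds a $\log n$ factor already absorbed into the bound.
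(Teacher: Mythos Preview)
Your proposal is correct and follows essentially the same route as the paper's proof: invoke Claim~\ref{clm:experr} (with realizability giving $\herr(\hrf;S')=0$) to bound $\herr(\whh;S')$, then use the sample-compression generalization bound of Lemma~\ref{lem:sampCom} with $k=O(d^2)$, and combine via a union bound. The paper simply sets $k=d^2$ and asserts that the compression argument ``immediately yields'' the generalization bound, so your version is, if anything, slightly more careful: you explicitly account for the $O(d)$ extra points needed to encode $\as$, and you flag (and resolve via the $n^{O(d^2)}$ union bound) the issue that the exponential mechanism's randomized output is not literally a deterministic function of a small compression set---a point the paper leaves implicit.
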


\begin{proof}
Let $\tcD$ be any distribution  over $\cX \times \cY$. Suppose $S' \sim \tcD^n$ (where $S'$ is a dataset in Step~\ref{stp:constS'} of Algorithm~\ref{alg:real}). %Note that combining Lemma~\ref{lem:Emphrf} and Lemma~\ref{lem:experr}, one can easily show that w.p. $\geq 1 - \beta/2$ (over randomness in Step~\ref{stp:expMech} of Algorithm~\ref{alg:real})
Note that by Claim~\ref{clm:experr}, we get that w.p. $\geq 1 - \beta/2$ (over randomness in Step~\ref{stp:expMech} of Algorithm~\ref{alg:real})
\begin{align*}
\herr\left(\whh; S'\right) -  \herr(\hrf;S') = \herr\left(\whh; S'\right) \leq  \frac{\alpha}{2}
%\herr\left(\whh; S'\right) \leq \herr(h^*;S') + \frac{\alpha}{2}\label{ineq:emp-errors}
\end{align*}
as long as $n = O\left(\frac{d^2 \log(d/\eps\alpha) + \log(1/\beta)}{~\eps~\alpha}\right),$ where here we used the fact that $\herr(\hrf;S')=0$ since this is the realizable setting.

Note that Lemma~\ref{lem:sampCom} (together with the argument before the statement of the lemma) immediately yields a bound on the generalization error of $\cA_\rel$ (with $k = d^2$ in the statement of the lemma). Namely, with probability $\geq 1-\beta/2$ (over the choice of $S' \sim \tcD^n$ and randomness in Step~\ref{stp:expMech} of Algorithm~\ref{alg:real}), we have: 
$$\lvert\err(\whh; \tcD)-\herr(\whh; S')\rvert \leq  \sqrt{\herr(\whh;S)\frac{4d^2\log(2n/\beta)}{n}} + \frac{8d^2\log(2n/\beta)}{n} + \frac{2d^2}{n}.$$
Now, using the bound on $\herr(\whh;S')$ above and for $n= O\left(\frac{d^2 \log(d/\eps\alpha) + \log(1/\beta)}{~\eps~\alpha}\right)$, we conclude that w.p. $\geq 1 - \beta,$ (over $S' \sim \tcD^n$ and the randomness in $\cA_\rel$), we have: $\err\left(\whh; \tcD\right) \leq \alpha.$

\end{proof}

\begin{thm}[PPM learning of halfspaces in the agnostic case]\label{thm:AlgAgn}
	Let $\epsilon,\alpha,\beta \in (0,1)$. Assuming  label-\stabname privacy model, $\cA_\rel$ (Algorithm~\ref{alg:real}) is an $(\alpha,\beta,\epsilon)$-PPM learner for halfspaces in $\re^d$, with input sample size 
	$$n = O\left(\left(d^2 \log\left(\frac{d}{\eps\alpha}\right) + \log(\frac{1}{\beta})\right)\max\left(\frac{1}{\alpha^2},\frac{1}{~\eps~\alpha}\right)\right).$$ 
\end{thm}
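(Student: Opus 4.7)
The plan is to adapt the realizable proof (Theorem~\ref{thm:AlgRel}) by replacing the use of $\herr(\hrf;S')=0$ with a standard agnostic decomposition. Fix $h^{\ast}\in\argmin_{h\in\cC}\err(h;\tcD)$; then by the triangle inequality,
\begin{align*}
\err(\whh;\tcD) - \err(h^{\ast};\tcD) &\leq \bigl|\err(\whh;\tcD) - \herr(\whh;S')\bigr| + \bigl(\herr(\whh;S') - \herr(\hrf;S')\bigr) \\
&\quad + \bigl(\herr(\hrf;S') - \herr(h^{\ast};S')\bigr) + \bigl|\herr(h^{\ast};S') - \err(h^{\ast};\tcD)\bigr|.
\end{align*}
The third summand is nonpositive since $\hrf$ is an ERM over $\cC\ni h^{\ast}$. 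I would show that each of the remaining three terms is at most $\alpha/3$ with probability at least $1-\beta/3$ and conclude by a union bound.

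For the excess empirical error term $\herr(\whh;S') - \herr(\hrf;S')$, Claim~\ref{clm:experr} already gives the desired bound of $\alpha/3$ provided $n = \Omega\bigl(\frac{d^2\log(d/\eps\alpha)+\log(1/\beta)}{\eps\alpha}\bigr)$; this contributes the $1/(\eps\alpha)$ branch of the sample complexity. For the reference term $|\herr(h^{\ast};S') - \err(h^{\ast};\tcD)|$, since $h^{\ast}$ is data-independent, a single-hypothesis Hoeffding bound suffices and makes this term at most $\alpha/3$ once $n = \Omega(\log(1/\beta)/\alpha^2)$. For the generalization error term $|\err(\whh;\tcD) - \herr(\whh;S')|$, I would invoke the sample compression bound (Lemma~\ref{lem:sampCom}) with compression size $k = O(d^2)$: every $g\in\cG$ is the intersection of at most $d$ halfspaces from $\wtc_{\pub}$ (possibly restricted to $\as$), and each such halfspace is supported by at most $d$ public points from the input dataset.

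The main obstacle, and the only substantive difference from the realizable case, is that in the agnostic setting I can no longer use a near-zero bound on $\herr(\whh;S')$ inside the compression inequality. Substituting the worst-case $\herr(\whh;S')\leq 1$ into Lemma~\ref{lem:sampCom} yields
\[
\bigl|\err(\whh;\tcD) - \herr(\whh;S')\bigr| \leq O\!\left(\sqrt{\frac{d^2 \log(n/\beta)}{n}} + \frac{d^2\log(n/\beta)}{n}\right),
\]
so the square-root term now dominates, and pushing it below $\alpha/3$ requires $n = \Omega(d^2\log(1/\beta)/\alpha^2)$. This is precisely the $1/\alpha^2$ branch that replaces the $1/\alpha$ rate of the realizable theorem. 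Combining the three sample-size requirements via a union bound and absorbing lower-order factors gives the claimed bound $n = O\bigl((d^2\log(d/\eps\alpha) + \log(1/\beta))\max(1/\alpha^2, 1/(\eps\alpha))\bigr)$. Privacy is unchanged from Lemma~\ref{lem:relHalfPriv}.
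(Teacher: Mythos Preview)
Your proof is correct and follows essentially the same approach as the paper: invoke Claim~\ref{clm:experr} for the empirical excess error, apply the sample compression bound (Lemma~\ref{lem:sampCom}) with $k=O(d^2)$ for the generalization of $\whh$ (using $\herr(\whh;S')\leq 1$ so that the square-root term dominates and forces the $1/\alpha^2$ rate), and combine via a union bound. The only minor difference is in the reference step: the paper routes through uniform convergence for the halfspace class $\cC$ (bounding $\lvert\herr(\hrf;S')-\err(\hrf;\tcD)\rvert$ and $\err(\hrf;\tcD)-\min_{h\in\cC}\err(h;\tcD)$ via VC bounds), whereas you use the ERM inequality $\herr(\hrf;S')\leq\herr(h^{\ast};S')$ together with a single-hypothesis Hoeffding bound for the fixed $h^{\ast}$---both are standard and give the same final sample complexity.
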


\begin{proof}

As in the proof of Theorem~\ref{thm:AlgRel}, with probability $\geq 1- \beta/4$ (over the randomness in Step~\ref{stp:expMech} of $\cA_\rel$), we have:
\begin{align}
 \herr\left(\whh; S'\right) - \herr(\hrf;S') \leq  \frac{\alpha}{4} \label{ineq:Agemp-err}
\end{align}
as long as $n= O\left(\frac{d^2 \log(d/\eps\alpha) + \log(1/\beta)}{~\eps~\alpha}\right).$
As before, Lemma~\ref{lem:sampCom} implies that with probability $\geq 1 - \beta/4$ (over the randomness in $S'$ and in $\cA_\rel$), we have
\begin{align*}
    \lvert\err(\whh; \tcD)-\herr(\whh; S')\rvert \leq  \frac{8d^2\log(4n/\beta)}{n} + \frac{2d^2}{n}.
\end{align*}
Hence, for $n= O\left(\frac{d^2 \log(d/\alpha) + \log(1/\beta)}{\alpha^2}\right)$, with probability $\geq 1- \beta/4$, we have:
\begin{align}
\lvert\err(\whh; \tcD)-\herr(\whh; S')\rvert&\leq \alpha/4,\label{ineq:gen-err-agn}
\end{align}

Moreover, by standard uniform convergence bounds \cite{shalev2014understanding}, for $n= O(\frac{d \log(1/\alpha) + \log(1/\beta)}{\alpha^2})$, with probability $\geq 1-\beta/2$ (over $S' \sim \tcD^n$), we have:
\begin{align}
\lvert \err\left(\hrf; \tcD\right)-  \herr\left(\hrf; S'\right)\rvert&\leq \alpha/4\label{ineq:gen-err-erm}\\
\err\left(\hrf; \tcD\right) - \min\limits_{h \in \cC}\err(h;\tcD) &\leq \frac{\alpha}{4} \label{ineq:ermh*}
\end{align}

Finally by combining (\ref{ineq:Agemp-err})-(\ref{ineq:ermh*}), and by the triangle inequality and the union bound, we conclude that for $n = O\left(\left(d^2 \log\left(\frac{d}{\eps\alpha}\right) + \log(\frac{1}{\beta})\right)\max\left(\frac{1}{\alpha^2},\frac{1}{~\eps~\alpha}\right)\right)$, with probability $\geq 1 - \beta$ (over the randomness in $S'$ and $\cA_\rel$), we have $\err\left(\whh; \tcD\right) - \min\limits_{h \in \cC}\err(h;\tcD) \leq \alpha.$ 

\end{proof}
\fi

\ifnips
\section*{Broader Impact}
Our work is theoretical in nature. Although there are no concrete, foreseeable ethical or societal impact for the research presented here, we hope that the new model we present for learning from mixtures of private and public populations could provide new insights that lead to a more realistic modeling for the problem of learning under privacy constraints, which, in turn, can lead to new practical privacy-preserving learning algorithms that meaningfully exploit data with no privacy concerns while providing strong privacy protection for sensitive data. Making progress in this direction can have significant impact on society in the long term.
\fi

\subsection*{Acknowledgements}
RB's research is supported by NSF Awards AF-1908281, SHF-1907715, and Google Faculty Research Award.

\bibliographystyle{alpha} 
\bibliography{reference}

\end{document}